\crefname{algocf}{algorithm}{algorithms}
\newtheorem{theorem}{Theorem}[section]
\newtheorem{corollary}[theorem]{Corollary}
\newtheorem{assumption}[theorem]{Assumption}
\newtheorem{definition}[theorem]{Definition}
\newcommand{\rec}{\mathrm{recommend}}
\newcommand{\obs}{\mathrm{observeUtility}}
\newcommand{\E}{\mathbb{E}}
\newcommand{\cX}{\mathcal{X}}
\newcommand{\cE}{\mathcal{E}}
\newcommand{\cZ}{\mathcal{Z}}
\newcommand{\cA}{\mathcal{A}}
\newcommand{\cR}{\mathcal{R}}
\newcommand{\cP}{\mathcal{P}}
\newcommand{\vz}{\mathbf{z}}
\newcommand{\vv}{\mathbf{v}}
\newcommand{\vu}{\mathbf{u}}
\newcommand{\vx}{\mathbf{x}}
\newcommand{\vp}{\mathbf{p}}
\newcommand{\vb}{\mathbf{b}}
\newcommand{\vh}{\mathbf{h}}
\newcommand{\vtheta}{\boldsymbol{\theta}}
\newcommand{\vomega}{\boldsymbol{\omega}}
\title{Nearly-Optimal Bandit Learning in\\ Stackelberg Games with Side Information}
\author{
Maria-Florina Balcan$^*$ \\
Carnegie Mellon University
\And 
Martino Bernasconi$^*$ \\
Bocconi University
\And
Matteo Castiglioni$^*$ \\
Politecnico di Milano
\And
Andrea Celli$^*$ \\
Bocconi University
\And
Keegan Harris\thanks{Authors are listed in alphabetical order. Corresponding author: \texttt{keegan.harris@berkeley.edu}.}\\ 
University of California, Berkeley
\And
Zhiwei Steven Wu$^*$ \\
Carnegie Mellon University
}
\begin{document}

\maketitle

\begin{abstract}
        We study the problem of online learning in Stackelberg games with side information between a \emph{leader} and a sequence of \emph{followers}.
    In every round the leader observes \emph{contextual information} and \emph{commits} to a mixed strategy, after which the follower \emph{best-responds}. 
    We provide learning algorithms for the leader which achieve $\Tilde{O}(T^{1/2})$ regret under \emph{bandit feedback}, an improvement from the previously best-known rates of $\Tilde{O}(T^{2/3})$. 
    Our algorithms rely on a reduction to linear contextual bandits in the utility space: In each round, a linear contextual bandit algorithm recommends a utility vector, which our algorithm inverts to determine the leader's mixed strategy. 
    We extend our algorithms to the setting in which the leader's utility function is unknown, and also apply it to the problems of bidding in second-price auctions with side information and online Bayesian persuasion with public and private states. 
    Finally, we observe that our algorithms empirically outperform previous results on numerical simulations.\looseness-1 
\end{abstract}

\section{Introduction}
Many real-world strategic settings take the form of \emph{Stackelberg games}, in which the leader \emph{commits} to a (randomized) strategy and the follower(s) \emph{best-respond}. 
For example, in security domains (e.g. airport security, wildlife protection) the leader (federal officers with drug-sniffing dogs, park rangers) chooses a patrol strategy, which the follower (drug smuggler, poacher) observes before choosing an area to exploit.  
In such settings, the leader may face different \emph{follower types} over time, each with their own goals and objectives. 

We study a generalization of the traditional Stackelberg game setting in which the payoffs of the players depend on additional \emph{contextual information} (or \emph{side information}) that is not captured in the players' actions and may vary over time. 
Such contextual information naturally arises in many Stackelberg game settings: 
In airport security, different parts of the airport may be more crowded during different parts of the day, which may make it easier or harder to smuggle items through security in those areas. 
In wildlife protection, different animal species may be easier or harder to poach at different times of the year, due to factors such as migration patterns and weather. 
More generally, side information may appear in the form of (an embedding of) natural language, such as threat reports, incident descriptions, or intelligence briefings, which can alter the leader’s optimal commitment and/or the follower’s strategic response.

\citet{harris2024regret} formalize this setting and provide online learning algorithms for the leader when the followers and contextual information change over time. 
Their algorithms obtain $\Tilde{O}(T^{1/2})$ regret\footnote{Regret is the cumulative difference between the best possible policy's utility and the leader's utility.} under \emph{full feedback} (i.e. when information about the follower is revealed to the leader after each round) where $T$ is the number of time-steps, but only $\Tilde{O}(T^{2/3})$ regret under the more challenging (and more realistic) bandit feedback setting, where only the follower's action is revealed. 

\paragraph{Our contributions} 
We close the gap from $\Tilde{O}(T^{2/3})$ to $\Tilde{O}(T^{1/2})$ regret under bandit feedback, which matches known lower bounds up to logarithmic factors. 
As in~\citet{harris2024regret}, we study two settings: one in which the sequence of contextual information is chosen adversarially and the sequence of followers is chosen stochastically, and the setting where the contextual information is chosen stochastically and the followers are chosen adversarially. 
Moreover, the algorithms of~\citet{harris2024regret} are not applicable when the follower's utility depends on the contextual information, an assumption which we do not need.\looseness-1 

In both settings (adversarial contextual information and adversarial follower types), our algorithm (\Cref{alg:meta}) is a reduction to linear contextual bandits. 
While the leader's utility is a non-linear function of their strategy, we can linearize the problem by playing in the leader's ``utility space''. 
In each round, a linear contextual bandit algorithm plays a vector in the image of the leader's utility, where the $i$-th component of the vector is the leader's expected utility when facing the $i$-th follower type. 
The leader then plays the strategy which induces this utility vector and gives their observed reward as feedback to the contextual bandit algorithm. 
By reformulating the problem in this way, we can take advantage of the rich literature on linear contextual bandits. 
Indeed, by instantiating~\Cref{alg:meta} with different contextual bandit algorithms, we obtain regret guarantees for both settings.\looseness-1 

Next we study an extension to the setting where the leader's utility function is unknown and must be learned over time. 
We show that a similar reduction to contextual bandits holds in this setting under a linearity assumption on the leader's utility function. 
This reduction still obtains $\Tilde{O}(T^{1/2})$ regret, albeit at the cost of additional polynomial factors in the size of the problem instance in the regret bound.\looseness-1 

%\Cref{alg:meta} may be thought of as transforming the leader's learning problem into a different space in which the rewards are linear, then playing a contextual bandit algorithm over a finite set of actions in this alternative space. 
%
%While the set of actions in both spaces is exponentially large, we obtain regret guarantees which only depend polynomially on the size of the problem by working in the dual space. 
%
In~\Cref{sec:other} we show how to apply our algorithm to learning in other settings which exhibit the same type of structure; specifically (i) learning in second-price auctions with side information and (ii) online Bayesian persuasion with side information. 
We are the first to study either of these settings, to the best of our knowledge, despite the fact that side information naturally arises in both auctions and Bayesian persuasion settings. 
Our results largely carry over to these applications as-is, although we need to discretize the learner's action space in a different way than we do for Stackelberg games. 
%
%Finally, we empirically evaluate~\Cref{alg:meta} in~\Cref{sec:expts} and highlight directions for future research in~\Cref{sec:conc}.\looseness-1 

Our work is conceptually related to \citet{bernasconi2023optimal}, who use a similar reduction to obtain $\Tilde{O}(T^{1/2})$ regret in online Bayesian persuasion, learning in auctions, and learning in Stackelberg games \emph{without} side information. 
While their main result is a reduction to adversarial linear bandits, our problem reduces to a linear contextual bandit problem with an infinite action set. 
Furthermore, either the action set or the sequence of contexts will be chosen adversarially, depending on which setting we are in. 
While such contextual bandit problems are generally intractable, we leverage the special structure present in our setting to obtain positive results. 
In particular, we observe that the optimal strategy at each time-step will always belong to a time-varying, but finite, set. 
Therefore, we can discretize the utility space in a way that allows us to apply our reduction with two specific linear contextual bandit algorithms that allow for both time-varying action sets and adversarially-chosen contexts/action sets. 
Additionally, while both our reduction and theirs leverage the linear structure which is induced from having finitely-many follower types, our extension to unknown leader utilities in~\Cref{sec:unknown} uses a more general version of this linear structure in order to compensate for the additional uncertainty from unknown utilities. 
Our reduction is also more streamlined, as it does not require a per-round application of Caratheodory's Theorem, due to our discretization step. 
\subsection{Related work}

\paragraph{Learning in Stackelberg games} 
\citet{conitzer2006computing} provide algorithms and prove NP-Hardness results for the problem of computing equilibrium in various Stackelberg game settings when all parameters of the problem are known. 
A line of work on \emph{learning} Stackelberg games~\cite{letchford2009learning, peng2019learning, bacchiocchi2024sample} relaxes the assumption that all parameters of the problem are known to the leader, and instead posits that they are given a number of (player actions, outcome) tuples to learn from. 

Our work falls under the category of \emph{online} learning in Stackelberg games, where the sequence of data arrives sequentially instead of all at once. 
This setting was first introduced by~\citet{balcan2015commitment} and was generalized to handle settings with side information in~\citet{harris2024regret}. 
\citet{DBLP:journals/corr/abs-2504-09006} also study side information, but focus on a more structured setting where the side information is predictive of the follower's type. 

Other recent work on learning in Stackelberg games includes meta-learning~\cite{DBLP:conf/iclr/HarrisAFK0S23}, learning in cooperative Stackelberg games (e.g.~\citet{zhao2023online, donahue2024impact}), strategizing against a follower who plays a no-regret learning algorithm (e.g.~\citet{braverman2018selling, deng2019strategizing}), and learning various structured Stackelberg games such as strategic classification (e.g.~\citet{hardt2016strategic, dong2018strategic, DBLP:conf/nips/HarrisPW23}), performative prediction (e.g.~\citet{perdomo2020performative, hardt2023performative}), and principal-agent problems (e.g.~\citet{ho2014adaptive}).\looseness-1 

Online learning in Stackelberg games is conceptually related to the online optimization of piecewise Lipschitz functions, as the underlying reward function after fixing any fixed follower type (and piece of side information) is piecewise linear~\cite{balcan2020semi, sharma2020learning, balcan2018dispersion}. 
However the techniques used in this line of work are not applicable to our setting, since the follower type and contextual information change from round-to-round. 

\paragraph{Learning in auctions and persuasion} 
Our algorithms are also applicable to generalizations of the problems of online learning in simultaneous second-price auctions~\cite{daskalakis2016learning} and online Bayesian persuasion~\cite{castiglioni2020online}. 
\citet{flajolet2017real} also study online learning in second-price auctions with side information. 
They consider single-item auction settings with a budget constraint, while we consider combinatorial auctions with no budget constraints. 
We also consider a more general form of adversarial feedback than they consider. 

\paragraph{Contextual bandits} 
Finally, one may view our setting as a special type of contextual bandit problem with continuous action spaces and non-linear rewards. 
While one could, in principle, attempt to apply a black-box contextual bandit algorithm to our setting (e.g.~\citet{syrgkanis2016efficient, syrgkanis2016improved, rakhlin2016bistro}), we are not aware of any algorithms which obtain meaningful performance guarantees under this reward structure without (1) making additional assumptions about the learner's knowledge of the sequence of contexts they will face \emph{and} (2) obtaining generally worse rates.\looseness-1
\section{Preliminaries}\label{sec:background}
We use $\Delta(\cA)$ to denote the probability simplex over the (finite) set $\cA$, and $[N] := \{1, \ldots, N\}$ to denote the set of integers from $1$ to $N \in \mathbb{N}_{>0}$. 

We study a repeated interaction between a leader and a sequence of followers over $T$ rounds. 
In round $t \in [T]$, both players observe a context $\vz_t \in \cZ \subseteq \mathbb{R}^d$, which represents the side information available (e.g. information about weather patterns, airport congestion levels) in the current round. 
The leader then commits to a mixed strategy $\vx_t \in \Delta(\cA_l)$, where $\cA_l$ is the leader's action set and $A_l := |\cA_l| < \infty$. 
After observing the context $\vz_t$ and the leader's mixed strategy $\vx_t$, follower $f_t$ plays action $a_{f,t} \in \cA_f$, where $\cA_f$ is the follower's action set and $A_f := |\cA_f| < \infty$. 
The leader's action $a_{l,t}$ is then sampled according to their mixed strategy $\vx_t$. 

After the round is over, the leader receives utility $u(\vz_t, a_{l,t}, a_{f,t})$, according to their utility function $u : \cZ \times \cA_l \times \cA_f \rightarrow [-1, 1]$. 
Similarly, follower $f_t$ receives utility $u_{f_t}(\vz_t, a_{l,t}, a_{f,t})$ according to utility function $u_{f_t} : \cZ \times \cA_l \times \cA_f \rightarrow [-1, 1]$. 
We often use the shorthand $u(\vz_t, \vx_t, a_{f,t}) := \E_{a_{l,t} \sim \vx_t}[u(\vz_t, a_{l,t}, a_{f,t})]$ (resp. $u_{f_t}(\vz_t, \vx_t, a_{f,t}) := \E_{a_{l,t} \sim \vx_t}[u_{f_t}(\vz_t, a_{l,t}, a_{f,t})]$) to denote the leader's (resp. follower's) expected utility with respect to the randomness in the leader's mixed strategy. 

We assume that the follower in each round is one of $K < \infty$ types $f_t \in [K]$, where follower type $i \in [K]$ corresponds to utility function $u_{i}$. 
We assume that $u_{1}, \ldots, u_{K}$ are known to the leader, but the identity of follower $f_t$ is \emph{never} revealed.\footnote{\citet{balcan2015commitment} show that learning is impossible when $K = \infty$ in (non-contextual) Stackelberg games, which implies an impossibility result for our setting.} 
This setting is referred to as \emph{bandit feedback} in the literature on online learning in Stackelberg games~\cite{balcan2015commitment, harris2024regret}.\looseness-1

Given a context $\vz_t$ and leader mixed strategy $\vx_t$, follower $f_t$'s \emph{best-response} is $b_{f_t}(\vz_t, \vx_t) := \arg\max_{a_f \in \cA_f} u_{f_t}(\vz_t, \vx_t, a_f)$, 
%
% \begin{equation*}
%     b_{f_t}(\vz_t, \vx_t) := \arg\max_{a_f \in \cA_f} u_{f_t}(\vz_t, \vx_t, a_f),
% \end{equation*}
%
where ties are broken in an unknown-but-fixed way. 
We measure the performance of the leader via the notion of \emph{regret}:
\begin{definition}[Contextual Stackelberg Regret]\label{def:reg}
    The leader's contextual Stackelberg regret with respect to context sequence $\vz_1, \ldots, \vz_T$ and follower sequence $f_1, \ldots, f_T$ is $R(T) := \sum_{t=1}^T u(\vz_t, \vx_t^*, b_{f_t}(\vz_t, \vx_t^*)) - u(\vz_t, \vx_t, b_{f_t}(\vz_t, \vx_t))$,
    %
    % \begin{equation*}
    %     R(T) := \sum_{t=1}^T u(\vz_t, \vx_t^*, b_{f_t}(\vz_t, \vx_t^*)) - u(\vz_t, \vx_t, b_{f_t}(\vz_t, \vx_t))
    % \end{equation*}
    %
    where $\vx_t^* = \pi^*(\vz_t) := \arg\max_{\vx \in \Delta(\cA_l)} \sum_{\tau : \vz_{\tau} = \vz_t} u(\vz_{\tau}, \vx, b_{f_{\tau}}(\vz_{\tau}, \vx))$
    %
    % \begin{equation*}
    %     \vx_t^* = \pi^*(\vz_t) := \arg\max_{\vx \in \Delta(\cA_l)} \sum_{\tau : \vz_{\tau} = \vz_t} u(\vz_{\tau}, \vx, b_{f_{\tau}}(\vz_{\tau}, \vx))
    % \end{equation*}
    %
    is the mixed strategy played by the optimal-in-hindsight policy $\pi^*$ at time $t$. 
\end{definition}
Since previous work~\cite{harris2024regret} shows that no-regret learning is impossible (i.e. there exists no algorithm for which $R(T) = o(T)$) when the sequence of contexts and the sequence of followers are chosen jointly by an adversary with knowledge of the leader's algorithm, we focus on two natural relaxations: the setting where the sequence of contexts is chosen by an adversary and the sequence of follower types are drawn from an unknown (stationary) distribution (\Cref{sec:contexts}) and the setting where the sequence of follower types are chosen by an adversary and the sequence of contexts are drawn from an unknown distribution (\Cref{sec:followers}). 
All of our results are applicable to the simpler setting where both the contexts and follower types are chosen stochastically. 

While the leader's action space $\Delta(\cA_l)$ is infinitely large, we follow the lead of previous work and consider a discretization which is nearly wothout loss of generality. 
The following two definitions are from~\citet{harris2024regret}.
\begin{definition}[Contextual Follower Best-Response Region]\label{def:cfbrr}
    For follower type $i \in [K]$, follower action $a_f \in \cA_f$, and context $\vz \in \cZ$, let $\cX_{\vz}(i, a_f) \subseteq \Delta(\cA_l)$ denote the set of all leader mixed strategies such that a follower of type $i$ best-responds to all $\vx \in \cX_{\vz}(i, a_f)$ by playing action $a_f$ under context $\vz$, i.e., $\cX_{\vz}(i, a_f) = \{\vx \in \cX : b_{i}(\vz, \vx) = a_f \}.$
\end{definition}
\begin{definition}[Contextual Best-Response Region]\label{def:cbrr}
    For a given function $\sigma : [K] \rightarrow \cA_f$, let $\cX_{\vz}(\sigma)$ denote the set of all leader mixed strategies such that under context $\vz$, a follower of type $i$ plays action $\sigma(i)$ for all $i \in [K]$, i.e.
    $\cX_{\vz} (\sigma) = \cap_{i \in [K]} \cX_{\vz}(i, \sigma(i)).$
\end{definition}
It is straightforward to show that all contextual best-response regions are convex and bounded (but not necessarily closed). 
Because of this, the loss in performance is negligible from restricting the leader's strategy space to be the set of approximate extreme points of all contextual best-response regions. 
Formally, we define $\cE_t$ as follows.
\begin{definition}[$\delta$-approximate extreme points]\label{def:extreme-points}
    Fix a context $\vz \in \cZ$ and consider the set of all non-empty contextual best-response regions. 
    For $\delta > 0$, $\cE_{\vz}(\delta)$ is the set of leader mixed strategies such that for all best-response functions $\sigma$ and any $\vx \in \Delta(\cA_l)$ that is an extreme point of cl$(\cX_{\vz}(\sigma))$, $\vx \in \cE_{\vz}(\delta)$ if $\vx \in \cX_{\vz}(\sigma)$. 
    Otherwise there is some $\vx' \in \cE_{\vz}(\delta)$ such that $\vx' \in \cX_{\vz}(\sigma)$ and $\| \vx' - \vx \|_1 \leq \delta$. 
    With a slight abuse of notation, we define the set of approximate extreme points $\cE_t$ to be $\cE_t := \cE_{\vz_t}(\frac{1}{T})$. 
\end{definition}
\citet{balcan2015commitment} show that $|\cE_t| = O((K A_f^2)^{A_l} A_f^{K})$. 
By Lemma 4.4 in~\citet{harris2024regret}, restricting the learner to policies which only play strategies in $\cE_t$ at round $t$ leads to at most $O(1)$ additional regret. 
We will use fact throughout the sequel. 
\section{A reduction to linear contextual bandits}\label{sec:reduction}

\begin{algorithm}[t]
        \SetAlgoNoLine
        %\KwIn{$\widehat{\vp}_1$}
        \textbf{Input:} Linear contextual bandit algorithm $\cR$\\
        \For{$t = 1, \ldots, T$}
        {
            Observe $\vz_t$, compute $U_t := \{\vu(\vz_t, \vx) : \vx \in \cE_{t}\}$\\
            Let $\vv_t \leftarrow \cR.\rec(U_t)$\\
            Commit to the mixed strategy $\vx_t$ which induces $\vv_t$\\ 
            Play action $a_{l,t} \sim \vx_t$ and call $\cR.\obs(\vv_t, u(\vz_t, a_{l,t}, b_{f_t}(\vz_t, \vx_t)))$ 
        }
        \caption{Reduction to Linear Contextual Bandits}
        \label{alg:meta}
\end{algorithm}

Our main result is an algorithm (\Cref{alg:meta}) that achieves $\Tilde{O}(T^{1/2})$ regret in both the setting where contexts are chosen adversarially and follower types are chosen stochastically (\Cref{sec:contexts}) and the setting where the contexts are chosen stochastically and follower types are chosen adversarially (\Cref{sec:followers}). 
While the leader's utility is a non-linear function of their mixed strategy $\vx_t$ in any given round (due to the follower's best-response $b(\vz_t, \vx_t)$), we can ``linearize'' the problem by leveraging the fact that the leader's utility can be written as  $u(\vz_t, \vx_t, b_{f_t}(\vz_t, \vx_t)) = \langle \vu(\vz_t, \vx_t), \mathbf{1}_{f_t} \rangle$, where $\vu(\vz, \vx) := [u(\vz, \vx, b_{1}(\vz, \vx)), \dots, u(\vz, \vx, b_{K}(\vz, \vx))]^\top \in \mathbb{R}^K$ 
% \begin{equation*}
% \vu(\vz, \vx) :=
% \begin{bmatrix}
% u(\vz, \vx, b_{1}(\vz, \vx)) \\ \vdots \\ u(\vz, \vx, b_{K}(\vz, \vx))
% \end{bmatrix}^
% \in \mathbb{R}^K
% \end{equation*}
%
is the vector of utilities the leader would receive against each follower type given context $\vz$ and mixed strategy $\vx$, and $\mathbf{1}_{f_t} \in \mathbb{R}^K$ is a one-hot vector with a $1$ in the $f_t$-th component and zeros elsewhere. 
Since $u(\vz_t, \vx_t, b_f(\vz_t, \vx_t))$ is a linear function of $\vu(\vz_t, \vx_t)$, one can use an off-the-shelf linear contextual bandit algorithm to pick a vector $\vv_t$ in the image of $\vu(\vz_t, \cdot)$, then invert the mapping to find the mixed strategy $\vx_t$ such that $\vv_t = \vu(\vz_t, \vx_t)$.\looseness-1 

\Cref{alg:meta} takes as input a linear contextual bandit algorithm $\cR$, which, (1) when given a (finite) set of actions $U_t$, returns an element $\vv_t \in U_t$ ($\cR.\rec()$) and (2) updates its internal parameters when given an action $\vv_t$ and a realized utility $u_t \in [-1, 1]$ ($\cR.\obs()$).
Finally, while the leader's action space $\Delta(\cA_l)$ is infinitely large (and thus, so is the dual space $\Tilde{U}_t := \{\vu(\vz_t, \vx) \; : \; \vx \in \Delta(\cA_l)\}$), the leader incurs essentially no loss in utility by restricting themselves to a finite (but exponentially-large) set of context-dependent points $\cE_t$, which roughly correspond to the set of extreme points of convex polytopes which are induced by the followers' best-responses.%\footnote{More details on how $\cE_t$ is defined may be found in~\Cref{sec:compute}.}
As such, our algorithm operates on the set of utility vectors $U_t := \{\vu(\vz_t, \vx) \; : \; \vx \in \cE_t \}$ in each round.\footnote{This is important, as the regret minimizers we instantiate~\Cref{alg:meta} with in~\Cref{sec:contexts} and~\Cref{sec:followers} both require the action set to be finite.}\looseness-1

\subsection{Adversarial contexts and stochastic follower types}\label{sec:contexts}
To get no-regret guarantees when the sequence of contexts is chosen adversarially and the sequence of follower types is chosen stochastically, we instantiate~\Cref{alg:meta} with the Optimism in the Face of Uncertainty for Linear models (OFUL) linear contextual bandit algorithm of~\citet{abbasi2011improved}. 
OFUL leverages the principle of optimism under uncertainty to balance exploration and exploitation. 
Specifically, it assumes a linear relationship between utilities and actions such that $\E[u_t] = \langle \vv_t, \vtheta^* \rangle$, where $\vv_t \in \mathbb{R}^K$ is an action from some exogenously-given set $U_t$, and $\vtheta^* \in \mathbb{R}^K$ is an unknown parameter.  
OFUL maintains a confidence set $C_t$ over $\vtheta^*$ in round $t$ such that $\vtheta^* \in C_t$ with high probability, which it updates based on the noisy observed utility $u_t$. 
In each round, it then selects the action that maximizes the upper confidence bound on the expected reward, i.e. it plays action $\vv_t \in \arg\max_{\vv \in U_t, \vtheta \in C_t} \langle \vv, \vtheta \rangle$.

We show that when follower types are chosen stochastically, the leader's utility at time $t$ can be written as $u(\vz_t, \vx_t, b_{f_t}(\vz_t, \vx_t)) = \langle \vu(\vz_t, \vx_t), \vp^* \rangle + \epsilon_t$, 
%
% \begin{equation*}
%     u(\vz_t, \vx_t, b_{f_t}(\vz_t, \vx_t)) = \langle \vu(\vz_t, \vx_t), \vp^* \rangle + \epsilon_t,
% \end{equation*}
%
where $\vp^* \in \Delta^K$ is the true (unknown) distribution over follower types, and $\epsilon_t \in [-4, 4]$ is a zero-mean random variable. 
Therefore by instantiating~\Cref{alg:meta} with OFUL, we can optimistically learn $\vp^*$ and attain $\Tilde{O}(\sqrt{T})$ regret in this setting.\looseness-1 
\begin{restatable}{theorem}{stackone}\label{thm:stack1}
    When $\cR$ is instantiated as the OFUL algorithm of~\citet{abbasi2011improved},~\Cref{alg:meta} obtains expected contextual Stackelberg regret $\E[R(T)] = O(K\sqrt{T}\log(T))$
    %
    % \begin{equation*}
    %     \E[R(T)] = O(K\sqrt{T}\log(T))
    % \end{equation*}
    %
    when the sequence of contexts is chosen adversarially and the sequence of follower types is chosen stochastically. 
    The expectation is taken with respect to both the randomness in~\Cref{alg:meta}, as well as the distribution over follower types.
\end{restatable}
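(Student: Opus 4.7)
The plan is to map the leader's problem onto a linear contextual bandit with unknown parameter $\vp^* \in \Delta^K$ and invoke the standard regret guarantee for OFUL. The excerpt already supplies the key linearization: conditioning on the history and on $\vv_t := \vu(\vz_t,\vx_t) \in U_t$, the observed reward satisfies
\begin{equation*}
u_t \;=\; u(\vz_t, a_{l,t}, b_{f_t}(\vz_t,\vx_t)) \;=\; \langle \vv_t, \vp^* \rangle + \epsilon_t,
\end{equation*}
where $\epsilon_t$ is conditionally mean-zero and bounded in $[-4,4]$ (and hence sub-Gaussian with a constant parameter). Since $U_t = \{\vu(\vz_t,\vx) : \vx \in \cE_t\}$ is finite by construction of $\cE_t$, the arm-selection step $\cR.\rec(U_t)$ is well-defined, and this is exactly the time-varying finite arm set OFUL can handle.

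Next, I would verify OFUL's preconditions: $\|\vp^*\|_2 \le 1$ since $\vp^* \in \Delta^K$, and $\|\vv\|_2 \le \sqrt K$ for every $\vv \in U_t$ since utilities lie in $[-1,1]$. Invoking Theorem~3 of \citet{abbasi2011improved} in dimension $d = K$ over horizon $T$ yields, with probability at least $1 - 1/T$,
\begin{equation*}
\sum_{t=1}^{T} \Big( \max_{\vv\in U_t}\langle \vv,\vp^*\rangle - \langle \vv_t,\vp^*\rangle \Big) \;=\; O\!\left(K\sqrt{T}\log T\right).
\end{equation*}

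What remains is to convert this linear-bandit pseudo-regret into contextual Stackelberg regret. Two structural identities do the bridging. On the algorithm side, $\langle \vv_t,\vp^*\rangle = \E_{f_t\sim\vp^*}\!\left[u(\vz_t,\vx_t,b_{f_t}(\vz_t,\vx_t))\mid \vz_t,\vx_t\right]$, so summing over $t$ and taking the outer expectation recovers the expected cumulative utility of \Cref{alg:meta}. On the benchmark side, the construction of $\cE_t$ is tailored so that $\max_{\vv\in U_t}\langle \vv,\vp^*\rangle = \max_{\vx\in\Delta(\cA_l)}\langle \vu(\vz_t,\vx),\vp^*\rangle$, which is the per-round expected value of the optimal-in-expectation policy at context $\vz_t$. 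Combining both identities with OFUL's high-probability bound and absorbing the $1/T$ failure event into an additive $O(1)$ term then gives $\E[R(T)] = O(K\sqrt T \log T)$.

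I expect the technical crux to be the final identification on the benchmark side: because $\vx^*_t$ in \Cref{def:reg} is a data-dependent maximizer over the exponentially-large but finite set $\cE_t$, a uniform concentration argument (Hoeffding plus a union bound over $\cE_t$, or an Azuma martingale bound) is needed to show that $\E[u(\vz_t,\vx^*_t,b_{f_t}(\vz_t,\vx^*_t))]$ matches $\max_\vx \langle \vu(\vz_t,\vx),\vp^*\rangle$ up to a lower-order $\tilde{O}(\sqrt T)$ term overall. Pooling follower-type information across contexts through the shared parameter $\vp^*$ is what lets this union-bound penalty scale only as $\log|\cE_t|$ (polynomial in the problem size), which is absorbed into the $\tilde{O}$.
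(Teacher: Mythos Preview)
Your reduction to OFUL is exactly the paper's argument: write the realized reward as $\langle \vv_t,\vp^*\rangle+\epsilon_t$ with $\epsilon_t$ conditionally mean-zero and bounded in $[-4,4]$ (the paper splits $\epsilon_t=\eta_t+\gamma_t$ into the follower-sampling and leader-action-sampling pieces, just as you implicitly do), then invoke the OFUL regret bound in dimension $K$.

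Where you diverge from the paper is in the benchmark step. The paper does not use any concentration-plus-union-bound argument over $\cE_t$; it simply invokes Lemma~4.4 of \citet{harris2024regret} to replace the optimal-in-hindsight comparator $\pi^*$ by the optimal policy $\pi^{(\cE)}$ restricted to $\cE_t$, at the cost of an additive $+1$ (this is where the $\tfrac{1}{T}$-approximate extreme points enter). After that substitution the comparator is a fixed policy, so $\E_{f_t}[u(\vz_t,\pi^{(\cE)}(\vz_t),b_{f_t}(\cdots))]=\langle \vp^*,\vu(\vz_t,\pi^{(\cE)}(\vz_t))\rangle \le \max_{\vv\in U_t}\langle \vp^*,\vv\rangle$, and OFUL's bound applies directly. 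Two consequences for your write-up: (i) your claim that the construction of $\cE_t$ makes $\max_{\vv\in U_t}\langle \vv,\vp^*\rangle$ \emph{equal} to the unrestricted maximum is slightly off---$\cE_t$ consists of $\tfrac{1}{T}$-approximate extreme points, so only near-equality holds, and this is precisely the content of the cited lemma; (ii) the ``technical crux'' you anticipate in the last paragraph is not needed and would not obviously work as stated, since with adversarial and potentially all-distinct contexts there is no per-context repetition to concentrate over, so a Hoeffding/union-bound argument tied to $\cE_t$ does not control the in-hindsight gap. Replace that paragraph with a direct appeal to Lemma~4.4 and your proof coincides with the paper's.
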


\subsection{Stochastic contexts and adversarial follower types}\label{sec:followers}
%
% \begin{algorithm}
% \caption{Regret Minimizer $\Tilde{\cR}$}
% \label{alg:regret_minimizer}

% Let $\cR'$ be logdet-FTRL (Algorithm 1 of~\citet{liu2024bypassing})

% \SetKwFunction{Recommend}{Recommend}
% \SetKwFunction{ObserveUtility}{ObserveUtility}

% \Recommend{$U_t$}:\\
% \Begin{
%     Create scaled action set $U'_t = \left\{\frac{\vv}{\sqrt{K}} : \vv \in U_t \right\}$\;\\
%     %
%     $\vv'_t = \cR'.\rec(U'_t)$\\
%     %
%     \Return $\sqrt{K} \cdot \vv'_t$\;
%     }

% \ObserveUtility{$\vv_t$, $u_t$}:\\
% \Begin{
%     Set $\vv_t' = \frac{\vv_t}{\sqrt{K}}$ and $u_t' = -\frac{u_t}{\sqrt{K}}$\;\\
%     %
%     Call $\cR'.\mathrm{observeLoss}(\vv_t', u_t')$\;
%     }

% \end{algorithm}

When the sequence of follower types is chosen adversarially, there will be no underlying distribution $\vp^*$ over follower types for the algorithm to learn. 
As such, instantiating $\cR$ with OFUL will not provide meaningful regret guarantees in this setting. 
Instead, we instantiate $\cR$ using a modified version of Algorithm 1 in~\citet{liu2024bypassing} (\Cref{alg:regret_minimizer}). 

Algorithm 1 in~\citet{liu2024bypassing} (henceforth referred to as logdet-FTRL) uses a variant of Follow-The-Regularized-Leader with the log-determinant barrier as the regularizer to solve a variant of the linear contextual bandit problem with adversarial losses. 
In their setting, the learner receives a set of actions $U'_t$ in each round which are drawn from some distribution over the unit ball and plays an action $\vv'_t \in U_t$.\footnote{logdet-FTRL requires $|U_t| < \infty$ in order to have finite per-round runtime.}
The learner then receives loss $\ell_t$ such that $\E[\ell_t] = \langle \vv'_t, \mathbf{y}_t \rangle$, where $\mathbf{y}_t$ is chosen adversarially.\looseness-1

In our setting, the set of leader mixed strategies $\cE_t$ is deterministically determined by the context $\vz_t$. 
Therefore, whenever the sequence of contexts $\{\vz_t\}_{t \in [T]}$ is drawn from a fixed distribution, so is the sequence $\{\cE_t\}_{t \in [T]}$, which then implies that $\{U_t\}_{t \in [T]}$ are also drawn from some fixed distribution. 
The last steps in order to apply logdet-FTRL are to (1) transform our action space from $[-1, 1]^K$ to the $K$-dimensional unit ball and (2) convert utilities to losses. 
We handle this in~\Cref{alg:regret_minimizer} by rescaling our actions by $\frac{1}{\sqrt{K}}$ and negating the observed utilities before passing them to logdet-FTRL. 

\begin{restatable}{theorem}{stacktwo}\label{thm:stack2}
    When $\cR$ is instantiated as the regret minimizer of~\Cref{alg:regret_minimizer},~\Cref{alg:meta} obtains expected contextual Stackelberg regret $\E[R(T)] = O(K^{2.5} \sqrt{T} \log(T))$
    %
    % \begin{equation*}
    %     \E[R(T)] = O(K^{2.5} \sqrt{T} \log(T))
    % \end{equation*}
    %
    when the sequence of contexts is chosen stochastically and the sequence of follower types is chosen adversarially. 
    The expectation is taken with respect to both the randomness in~\Cref{alg:meta}, as well as the distribution over contexts.
\end{restatable}

\subsection{Extension to unknown utilities}\label{sec:unknown}
So far we have assumed that the leader's utility function $u$ is known. 
In this section, we relax this assumption and show that a modification of~\Cref{alg:meta} obtains $\Tilde{O}(\sqrt{T})$ regret when $u$ is unknown, under an additional linearity assumption (\Cref{ass:linear}). 
\begin{assumption}\label{ass:linear}
    Given context $\vz \in \cZ$, leader action $a_l \in \cA_l$, and follower action $a_f \in \cA_f$, the leader's utility is $u(\vz, a_l, a_f) := \langle \vz, U(a_l,a_f) \rangle$ 
    %
    % \begin{equation*}
    %     u(\vz, a_l, a_f) := \langle \vz, U(a_l,a_f) \rangle
    % \end{equation*}
    %
    where $U(a_l, a_f) \in \mathbb{R}^d$ is unknown to the leader. 
\end{assumption}
This setting may be thought of as both a generalization of Stackelberg games (to settings where there is side information) \emph{and} a generalization of linear contextual bandits (to settings where another player's action influences the utility of the learner). 

Our key insight is that under~\Cref{ass:linear}, the leader's utility can still be written as a linear function of some known vector $\vh(\vz, \vx)$, albeit in larger $(d \times K \times A_l \times A_f)$-dimensional space (\Cref{thm:reform}). 
\Cref{thm:reform} is stated in terms of a generic distribution $\gamma$ over follower types. 
This distribution $\gamma$ corresponds to either the true underlying distribution over follower types $\vp^*$ (when follower types are chosen stochastically), or the empirical distribution in hindsight over follower types (when they are chosen adversarially).\looseness-1
\begin{restatable}{theorem}{reform}\label{thm:reform}
    Under~\Cref{ass:linear}, the leader's expected utility (with respect to distribution $\gamma$ over follower types) can be written as $\E_{f \sim \gamma}[u(\vz, \vx, b_{f}(\vz, \vx))] = \langle \mathbf{h}(\vz, \vx), \vtheta \rangle$
    %
    % \begin{equation*}
    %     \E_{f \sim \gamma}[u(\vz, \vx, b_{f}(\vz, \vx))] = \langle \mathbf{h}(\vz, \vx), \vtheta \rangle
    % \end{equation*}
    %
    for some $\vh(\vz, \vx) \in \mathbb{R}^{d \times K \times A_l \times A_f}$ which is known to the leader and $\vtheta \in \mathbb{R}^{d \times K \times A_l \times A_f}$ which is not. 
\end{restatable}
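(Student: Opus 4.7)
The plan is to explicitly construct $\vh(\vz,\vx)$ and $\vtheta$ by expanding the expected utility and collecting the terms that depend only on quantities the leader can compute versus those that depend on the unknown utility parameters (and the possibly unknown distribution $\gamma$).

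First I would unfold the expectation using the law of total expectation over follower types, the definition of $\vx_t$ as a mixed strategy, and Assumption~\ref{ass:linear}:
\begin{equation*}
\E_{f \sim \gamma}[u(\vz,\vx,b_f(\vz,\vx))] \;=\; \sum_{i=1}^K \gamma_i \sum_{a_l \in \cA_l} \vx(a_l)\, \bigl\langle \vz,\, U\bigl(a_l, b_{\alpha^{(i)}}(\vz,\vx)\bigr) \bigr\rangle.
\end{equation*}
The inner product $\langle \vz, U(a_l, a_f)\rangle$ is itself a sum over the $d$ coordinates, so the whole expression is a sum of products indexed by $(k, i, a_l, a_f) \in [d] \times [K] \times \cA_l \times \cA_f$. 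This motivates splitting each summand into a ``known'' factor and an ``unknown'' factor.

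Next I would define, for each index $(k, i, a_l, a_f)$,
\begin{equation*}
\vh(\vz,\vx)_{k,i,a_l,a_f} \;:=\; \vz_k \cdot \vx(a_l) \cdot \mathbbm{1}\bigl\{b_{\alpha^{(i)}}(\vz,\vx) = a_f\bigr\}, \qquad \vtheta_{k,i,a_l,a_f} \;:=\; \gamma_i \cdot U(a_l,a_f)_k.
\end{equation*}
Each entry of $\vh(\vz,\vx)$ is computable by the leader: $\vz$ is observed at the start of the round, $\vx$ is chosen by the leader, and since the follower utility functions $u_{\alpha^{(1)}},\ldots,u_{\alpha^{(K)}}$ are known, every best response $b_{\alpha^{(i)}}(\vz,\vx)$ can be evaluated. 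On the other hand, $\vtheta$ bundles the unknown linear utility vectors $U(\cdot,\cdot)$ together with the (possibly unknown) distribution $\gamma$, so the leader need not know $\gamma$ separately; it is enough that $\vtheta$ is a fixed vector for any fixed $\gamma$.

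Finally I would verify the claimed identity by plugging the definitions into the inner product and collapsing the indicator over $a_f$: for each fixed $(i,a_l)$, exactly one term in the $a_f$-sum survives, namely $a_f = b_{\alpha^{(i)}}(\vz,\vx)$, and the $k$-sum recovers $\langle \vz, U(a_l, b_{\alpha^{(i)}}(\vz,\vx))\rangle$. The $a_l$-sum then recovers $u(\vz,\vx,b_{\alpha^{(i)}}(\vz,\vx))$ and the $i$-sum recovers the expectation over $\gamma$, matching the left-hand side. There is no real ``hard part'' here; the main thing to get right is the bookkeeping that places every unknown quantity (the utility coefficients $U$ and the type distribution $\gamma$) inside $\vtheta$ while keeping every adaptive or context-dependent quantity (the context $\vz$, the chosen mixed strategy $\vx$, and the best-response indicators) inside $\vh$. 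This cleanly separates ``what the leader sees'' from ``what the leader must learn,'' which is precisely the structure needed to feed $\vh(\vz_t,\vx_t)$ into a linear contextual bandit subroutine in the unknown-utility extension.
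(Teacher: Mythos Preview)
Your proposal is correct and essentially identical to the paper's own proof: both expand the expectation over follower types, unfold the linear utility assumption coordinatewise, and define $\vh$ and $\vtheta$ exactly as you do (the paper writes $\vh(\vz,\vx)[n(i,a_l,a_f,j)] = \vz[j]\,\vx[a_l]\,\mathbbm{1}\{a_f = b_{\alpha^{(i)}}(\vz,\vx)\}$ and $\vtheta[n(i,a_l,a_f,j)] = U(a_l,a_f)[j]\,\mathbb{P}_\gamma(f=\alpha^{(i)})$, using an explicit flattening index $n$ where you use a multi-index). Your verification step is, if anything, more explicit than the paper's.
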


The proof of~\Cref{thm:reform} is constructive, but the closed-form expression of $\vh(\vz, \vx)$ is somewhat cumbersome, so we relegate it to~\Cref{app:reduction}. 

Given the results of~\Cref{thm:reform}, we can immediately obtain regret guarantees for the unknown utilities setting by running~\Cref{alg:regret_minimizer} using the action set $U_t := \{\vh(\vz_t, \vx) : \vx \in \cE_{t}\}$ (instead of $U_t = \{\vu(\vz_t, \vx) : \vx \in \cE_{t}\}$) in round $t$. 
Since $\vh(\vz_t, \vx) \in \mathbb{R}^{d \times K \times A_l \times A_f}$, our regret will scale as $\Tilde{O}(\mathrm{poly}(dKA_l A_f)\sqrt{T})$, compared to the $\Tilde{O}(\mathrm{poly}(K)\sqrt{T})$ rates in~\Cref{sec:contexts} and~\Cref{sec:followers}. 
Thus, a $\mathrm{poly}(dA_l A_f)$ term is the price we pay for handling unknown utilities in our setting.

\begin{corollary}
    Under~\Cref{ass:linear}, when $\cR$ is instantiated as the OFUL algorithm of~\citet{abbasi2011improved} and $U_t := \{\vh(\vz_t, \vx) : \vx \in \cE_{t}\}$,~\Cref{alg:meta} obtains expected contextual Stackelberg regret $\E[R(T)] = O(d K A_l A_f \sqrt{T} \log(T))$ 
    %
    % \begin{equation*}
    %     \E[R(T)] = O(d K A_l A_f \sqrt{T} \log(T)).
    % \end{equation*}
    %
    when the sequence of contexts is chosen adversarially and the sequence of follower types is chosen stochastically. 
\end{corollary}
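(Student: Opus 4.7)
The plan is to mirror the proof of Theorem~\ref{thm:stack1} almost verbatim, with the sole substitution of the linear reformulation given by Theorem~\ref{thm:reform} in place of the simpler linear decomposition $u(\vz_t,\vx_t,b_{f_t}(\vz_t,\vx_t)) = \langle \vu(\vz_t,\vx_t),\vp^*\rangle + \epsilon_t$. Concretely, since follower types are stochastic we take $\gamma = \vp^*$ in Theorem~\ref{thm:reform}, so that the (doubly) expected utility of any mixed strategy $\vx$ at context $\vz$ equals $\langle \vh(\vz,\vx),\vtheta\rangle$ for a fixed unknown $\vtheta \in \mathbb{R}^{dKA_lA_f}$ and a known feature map $\vh(\vz,\vx)$ that the leader can compute in closed form. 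This places us exactly in the OFUL framework, but with ambient dimension $D := dKA_lA_f$ rather than $K$.

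Next I would verify the two hypotheses needed to invoke the OFUL regret bound from~\citet{abbasi2011improved}. First, I would write the single-round realized utility as $u(\vz_t,a_{l,t},a_{f,t}) = \langle \vh(\vz_t,\vx_t),\vtheta\rangle + \epsilon_t$, where $\epsilon_t$ collects (i) the randomness in $a_{l,t} \sim \vx_t$, (ii) the randomness in $f_t \sim \vp^*$, and (iii) the randomness in $a_{f,t} = b_{f_t}(\vz_t,\vx_t)$ conditional on tiebreaking. Because utilities lie in $[-1,1]$, this noise is bounded (hence sub-Gaussian) and has zero conditional mean given the history and the leader's choice $\vv_t = \vh(\vz_t,\vx_t)$, which is the standard OFUL hypothesis. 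Second, I would verify that $\|\vh(\vz,\vx)\|_2$ and $\|\vtheta\|_2$ are polynomially bounded in $d,K,A_l,A_f$ under~\Cref{ass:linear} and the boundedness of $\vz \in \cZ$; this follows directly from the construction of $\vh$ in the proof of Theorem~\ref{thm:reform} and is the only place the precise form of $\vh$ is used.

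With these two ingredients in hand, OFUL guarantees cumulative pseudo-regret $O(D\sqrt{T}\log T) = O(dKA_lA_f\sqrt{T}\log T)$ against the best fixed action in $U_t$ per round. To convert this into the stated contextual Stackelberg regret, I would then repeat the argument used for Theorem~\ref{thm:stack1}: the optimal mixed strategy $\vx_t^* \in \Delta(\cA_l)$ in Definition~\ref{def:reg} can, without loss of utility, be replaced by an element of $\cE_t$ (this is the same discretization argument sketched after~\Cref{alg:meta}, which depends only on the structure of best-response polytopes and is agnostic to whether we work with $\vu$ or $\vh$), so $\vh(\vz_t,\vx_t^*) \in U_t$ and OFUL's benchmark coincides with the desired one up to negligible terms.

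The main obstacle I anticipate is purely bookkeeping rather than conceptual: carefully tracking the norms $\|\vh(\vz,\vx)\|_2$ and $\|\vtheta\|_2$ through the construction of Theorem~\ref{thm:reform} to confirm that the logarithmic confidence-set factor in OFUL contributes only $\log T$ rather than an extra factor of $D$, and checking that the discretization to $\cE_t$ continues to preserve optimality once the objective has been lifted to the higher-dimensional feature space. Everything else is a direct plug-in to OFUL's guarantee.
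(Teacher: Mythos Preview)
Your proposal is correct and matches the paper's approach exactly: the paper does not supply a separate proof for this corollary, instead stating that it follows immediately from \Cref{thm:reform} by replacing $\vu(\vz_t,\vx)$ with $\vh(\vz_t,\vx)$ and re-running the argument of \Cref{thm:stack1} with ambient dimension $dKA_lA_f$ in place of $K$. The bookkeeping concerns you flag (norm bounds on $\vh,\vtheta$ and the fact that the discretization to $\cE_t$ depends only on best-response polytopes, hence is agnostic to the feature map) are legitimate but are left implicit in the paper as well.
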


\begin{corollary}
    Under~\Cref{ass:linear}, when $\cR$ is instantiated as the regret minimizer of~\Cref{alg:regret_minimizer} and $U_t := \{\vh(\vz_t, \vx) : \vx \in \cE_{t}\}$,~\Cref{alg:meta} obtains expected contextual Stackelberg regret $\E[R(T)] = O((d K A_l A_f)^{2.5} \sqrt{T} \log(T))$
    %
    % \begin{equation*}
    %     \E[R(T)] = O((d K A_l A_f)^{2.5} \sqrt{T} \log(T))
    % \end{equation*}
    %
    when the sequence of contexts is chosen stochastically and the sequence of follower types is chosen adversarially. 
\end{corollary}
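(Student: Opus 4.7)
The plan is to mirror the proof of \Cref{thm:stack2}, but to work in the $(dKA_lA_f)$-dimensional feature space supplied by \Cref{thm:reform} instead of the $K$-dimensional utility space. First I would invoke \Cref{thm:reform} to write, for each context $\vz_t$ and strategy $\vx \in \cE_t$, the expected leader utility as $\langle \vh(\vz_t,\vx), \vtheta_\gamma \rangle$ for some $\vtheta_\gamma \in \mathbb{R}^{dKA_lA_f}$ depending on the follower distribution $\gamma$ and the unknown tensor $U(\cdot,\cdot)$. Taking $\gamma$ to be the point mass on the adversarially-chosen follower $f_t$, the expected per-round leader utility (over the randomness in $a_{l,t}\sim\vx_t$) becomes $\langle \vh(\vz_t,\vx_t), \vtheta_{f_t}\rangle$, and the $[-1,1]$ range of the utility function forces each $\vtheta_{f_t}$ to be bounded. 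The sequence $\{\vtheta_{f_t}\}_t$ thus plays the role of an adversarial loss direction in exactly the form that logdet-FTRL is designed to handle.

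Next I would verify the hypotheses of \Cref{alg:regret_minimizer} in the new space. Because $\cE_t$ is a deterministic function of $\vz_t$, the action set $U_t = \{\vh(\vz_t,\vx) : \vx \in \cE_t\}$ is finite and, when $\{\vz_t\}$ is i.i.d., is itself an i.i.d.\ draw from a fixed distribution --- precisely the distributional property used in the proof of \Cref{thm:stack2}. The $1/\sqrt{K}$ rescaling in \Cref{alg:regret_minimizer} has to be replaced by a rescaling of order $1/\sqrt{dKA_lA_f}$ so that $U_t$ fits inside the unit ball of $\mathbb{R}^{dKA_lA_f}$; the bound $\|\vh(\vz,\vx)\| = O(\sqrt{dKA_lA_f})$ needed here should come from the explicit construction of $\vh$ in \Cref{app:reduction}. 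Plugging the new dimension into the logdet-FTRL regret bound of \citet{liu2024bypassing} yields regret of order $(dKA_lA_f)^2\sqrt{T}\log(T)$ in the rescaled coordinates, and undoing the rescaling multiplies by an additional $\sqrt{dKA_lA_f}$, producing the claimed $(dKA_lA_f)^{2.5}\sqrt{T}\log(T)$ rate.

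Translating contextual bandit regret back to contextual Stackelberg regret then proceeds exactly as in \Cref{thm:stack2}: for each fixed $\vz$, maximizing $\langle \vh(\vz,\vx), \vtheta_\gamma\rangle$ over $\vx \in \cE_t$ recovers the best leader strategy against $\gamma$, so competing against the best fixed vector in $U_t$ (in expectation over the i.i.d.\ context distribution) corresponds to competing against $\pi^*$ up to the negligible loss from discretizing $\Delta(\cA_l)$ into $\cE_t$ guaranteed by the construction in \Cref{sec:compute}.

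The main obstacle I anticipate is the norm control $\|\vh(\vz,\vx)\| = O(\sqrt{dKA_lA_f})$: the explicit form of $\vh$ bundles the context $\vz$, the one-hot selectors for the $K$ follower types, and the $A_l \times A_f$ action indicators into a single tensor, and a naive entrywise bound can easily leave slack that would push the exponent on $dKA_lA_f$ above $2.5$. Tightening this is where the real work concentrates. A secondary, more routine, verification is that the adversarial direction $\vtheta_{f_t}$ --- which itself encodes the unknown $U$ --- is bounded in the sense required by logdet-FTRL; this should follow directly from \Cref{ass:linear} together with the $[-1,1]$ range of $u$.
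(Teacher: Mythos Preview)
Your proposal is correct and follows essentially the same approach as the paper. The paper itself does not spell out a proof of this corollary: it simply observes that, once \Cref{thm:reform} furnishes a linear representation $u(\vz,\vx,b_f(\vz,\vx))=\langle\vh(\vz,\vx),\vtheta\rangle$ in $\mathbb{R}^{dKA_lA_f}$, one reruns the argument of \Cref{thm:stack2} verbatim with the dimension $K$ replaced by $dKA_lA_f$, yielding the stated $(dKA_lA_f)^{2.5}\sqrt{T}\log(T)$ rate. Your plan---taking $\gamma=\delta_{f_t}$ in \Cref{thm:reform} to obtain the adversarial direction $\vtheta_{f_t}$, rescaling $U_t$ into the unit ball of the larger space, and invoking the logdet-FTRL bound---is exactly this substitution made explicit, and the norm/boundedness checks you flag are indeed the only places requiring care (they follow from the explicit form of $\vh$ in \Cref{app:reduction} and the $[-1,1]$ range of $u$).
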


\subsection{Better runtimes in special cases}\label{sec:compute}
In all previous sections, the per-round runtime of~\Cref{alg:meta} is $O(\mathrm{poly}(\cE_t, K, A_l, A_f, d))$. 
In general $\cE_t$ is exponentially-large in the size of the problem, and so the worst-case runtime of each instantiation of~\Cref{alg:meta} is exponential. 
This is to be expected, since we inherit the per-round NP-hardness results from the non-contextual Stackelberg game setting of~\citet{li2016catcher}, combined with the offline to online reduction of~\citet{roughgarden2019minimizing}. 
With that being said, there are several interesting cases for which the runtime of~\Cref{alg:meta} can be improved. 
%
%We highlight several here. 

%\paragraph{Background on $\cE_t$}

\paragraph{1. Small number of effective follower types} 
Consider a setting with three follower types, where $u_{1}(\vz, \vx, a_f)$ and $u_{2}(\vz, \vx, a_f)$ are arbitrary and $u_{3}(\vz, \vx, a_f) = \mathbbm{1}\{\vz \in \cZ'\} \cdot u_{1}(\vz, \vx, a_f) + \mathbbm{1}\{\vz \not\in \cZ'\} \cdot u_{2}(\vz, \vx, a_f)$ for some subset of contexts $\cZ' \subset \cZ$. 
While $K=3$, the number of approximate extreme points at each round is only $|\cE_t| = O((2 A_f^2)^{A_l} A_f^{2})$, since the best-response regions of follower type $3$ always overlap with those of either follower type $1$ or $2$.
Such overlap between follower best-response regions can happen in more general settings; we capture this through the notion of \emph{effective follower types}.
\begin{definition}[Effective follower types]
    We say that there are $K'$ effective follower types in round $t$ if, fixing $\vz_t$, there are $K'$ unique follower utility functions.\looseness-1 
\end{definition}
When there are $K'$ effective follower types in round $t$, there are at most $|\cE_t| = O((K' A_f^2)^{A_l} \cdot A_f^{K'})$ approximate extreme points, which may be much less than the worst-case bound of $O((K A_f^2)^{A_l} \cdot A_f^{K})$ if $K'$ is small or constant. 

\paragraph{2. Few non-dominated leader actions per round}
Similarly, it could be the case that for context $\vz_t$, there exists two leader actions $a_l$ and $a_l'$ such that $u(\vz_t, a_l, a_f) \leq u(\vz_t, a_l', a_f)$ for all $a_f \in \cA_f$. 
When this happens, we say that action $a_l$ is \emph{dominated by} action $a_l'$ in round $t$. 
If there are $A_l'$ non-dominated actions in round $t$, then $|\cE_t| = O((K A_f^2)^{A'_l} \cdot A_f^{K})$.%\footnote{The set of non-dominated strategies may change from round-to-round, depending on the current context.\looseness-1}

\paragraph{3. Exogenously-supplied leader strategies}
Suppose that instead of defining $\cE_t$ according to~\Cref{def:extreme-points}, an external algorithm supplies a set of extreme points $\cE_t' \subset \cE_t$ in each round $t \in [T]$ such that $|\cE'_t| = O(\mathrm{poly}(A_l, A_f, K))$. 
If $\cE'_t \ni \arg\max_{\vx \in \cE_t} \E[u(\vz_t, \vx, f_t(\vz_t, \vx))]$ with probability at least $1 - \delta$, then the expected regret of running~\Cref{alg:meta} using $\{\cE'_t\}_{t=1}^T$ instead of $\{\cE_t\}_{t=1}^T$ is $O(\E[R(T)] + \delta T)$, where $\E[R(T)]$ is the expected regret of running~\Cref{alg:meta} using $\{\cE_t\}_{t=1}^T$.
\subsection{Experiments}\label{sec:expts}
%
% \begin{figure}[t]
%     \centering
%     \includegraphics[width=\linewidth]{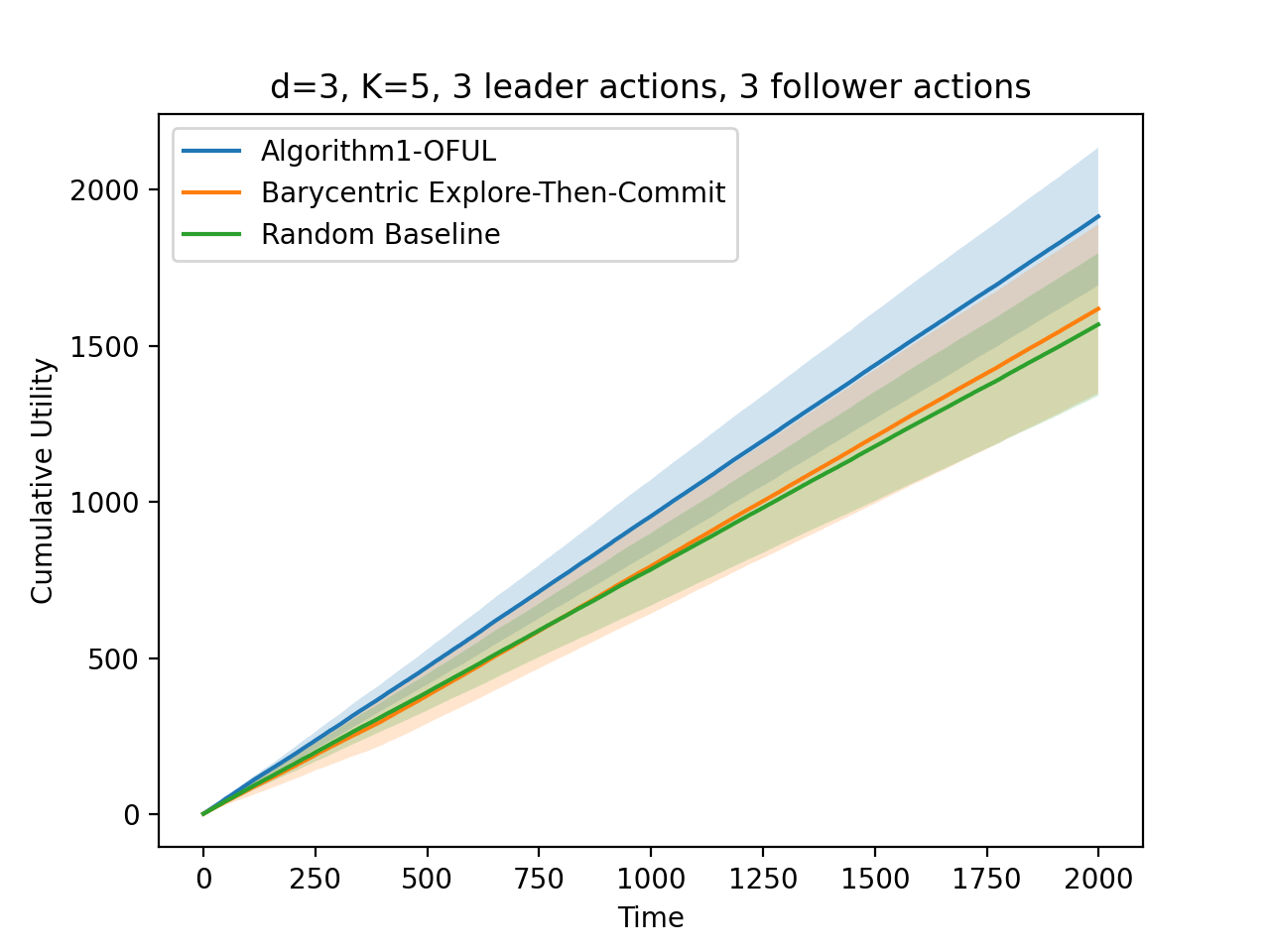}
%     \caption{Cumulative utility of~\Cref{alg:meta} instantiated with OFUL (Algorithm1-OFUL), Algorithm $3$ of~\citet{harris2024regret} (Barycentric Explore-Then-Commit), and the random baseline over $T=2,000$ rounds in a setting with $5$ follower types, where each player has $3$ actions and the context dimension is also $3$. Results are averaged over $10$ runs. The hyperparameter of Barycentric Explore-Then-Commit was tuned to maximize performance.}
%     \label{fig:baseline_applicable}
% \end{figure}

% \begin{figure}[t]
%     \centering
%     \includegraphics[width=\linewidth]{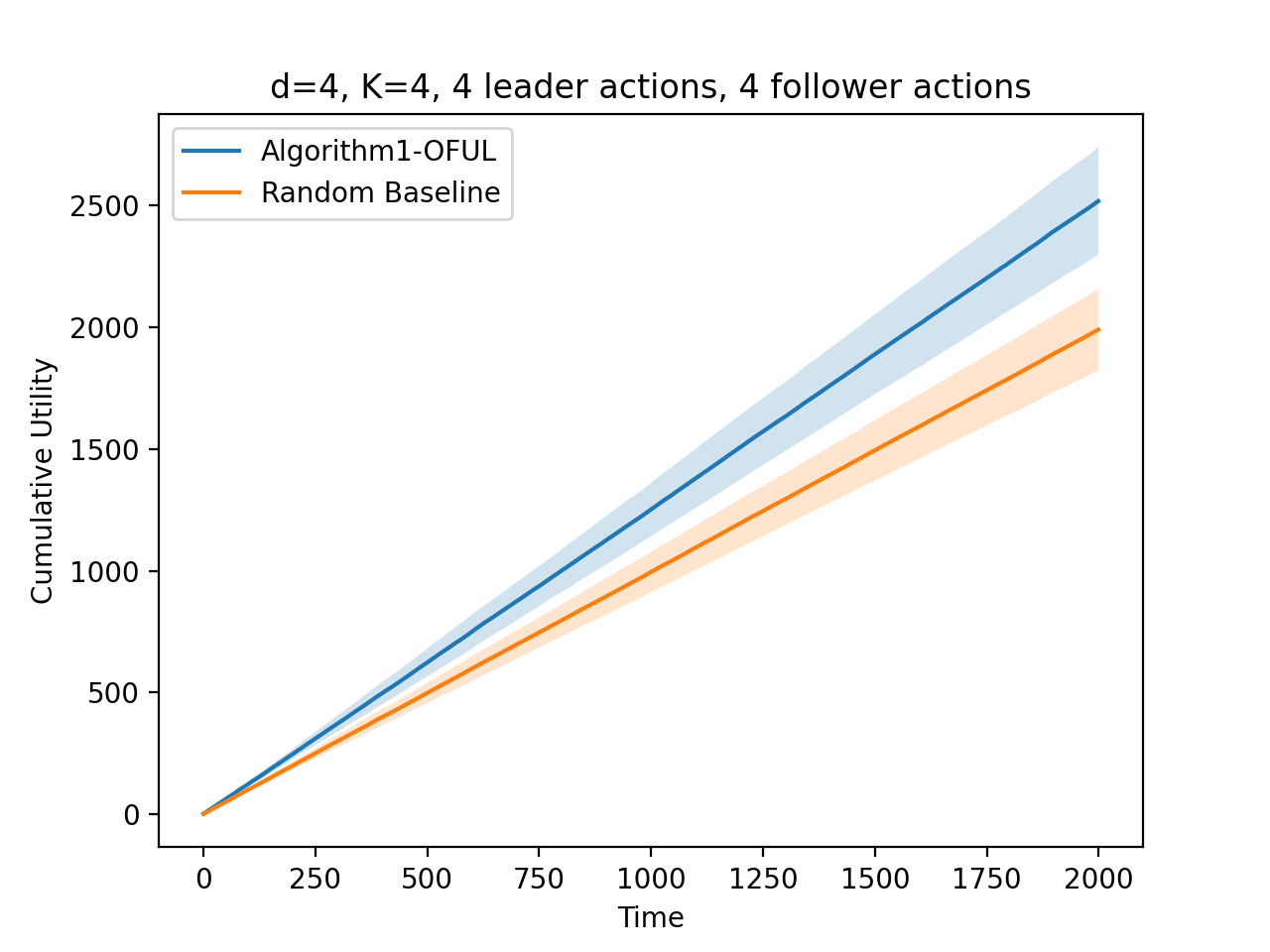}
%     \caption{Cumulative utility of~\Cref{alg:meta} instantiated with OFUL (Algorithm1-OFUL) and the random baseline over $T=2,000$ rounds in a setting with $4$ follower types, where each player has $4$ actions and the context dimension is also $4$. Results are averaged over $10$ runs. Algorithm $3$ of~\citet{harris2024regret} is not applicable in this setting because the follower's utility depends on the context.}
%     \label{fig:baseline_not}
% \end{figure}

\begin{figure}[t]
     \centering
     \begin{subfigure}[b]{0.49\textwidth}
         \centering
         \includegraphics[width=\textwidth]{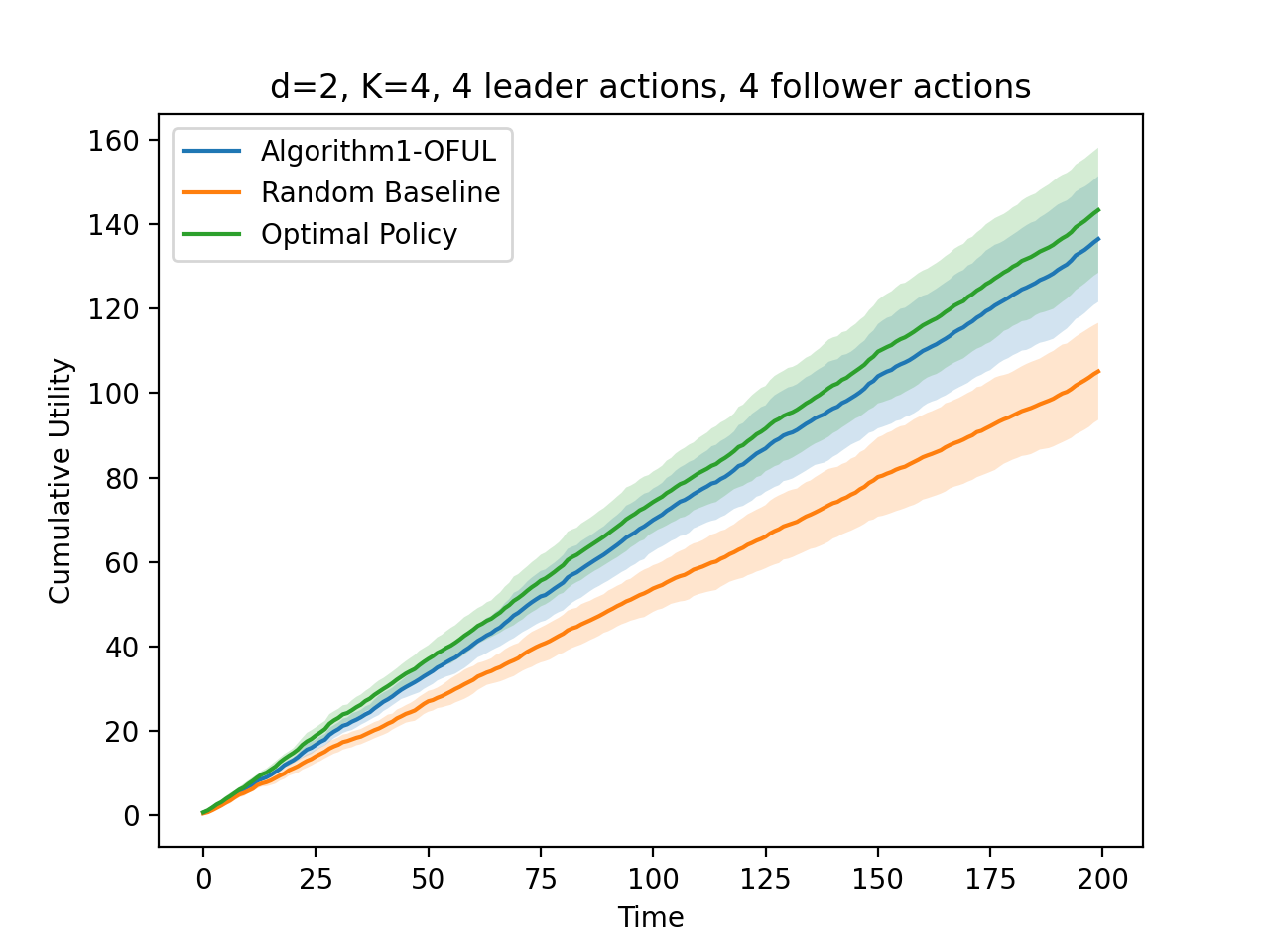}
     \end{subfigure}
     \hfill
     \begin{subfigure}[b]{0.49\textwidth}
         \centering
         \includegraphics[width=\textwidth]{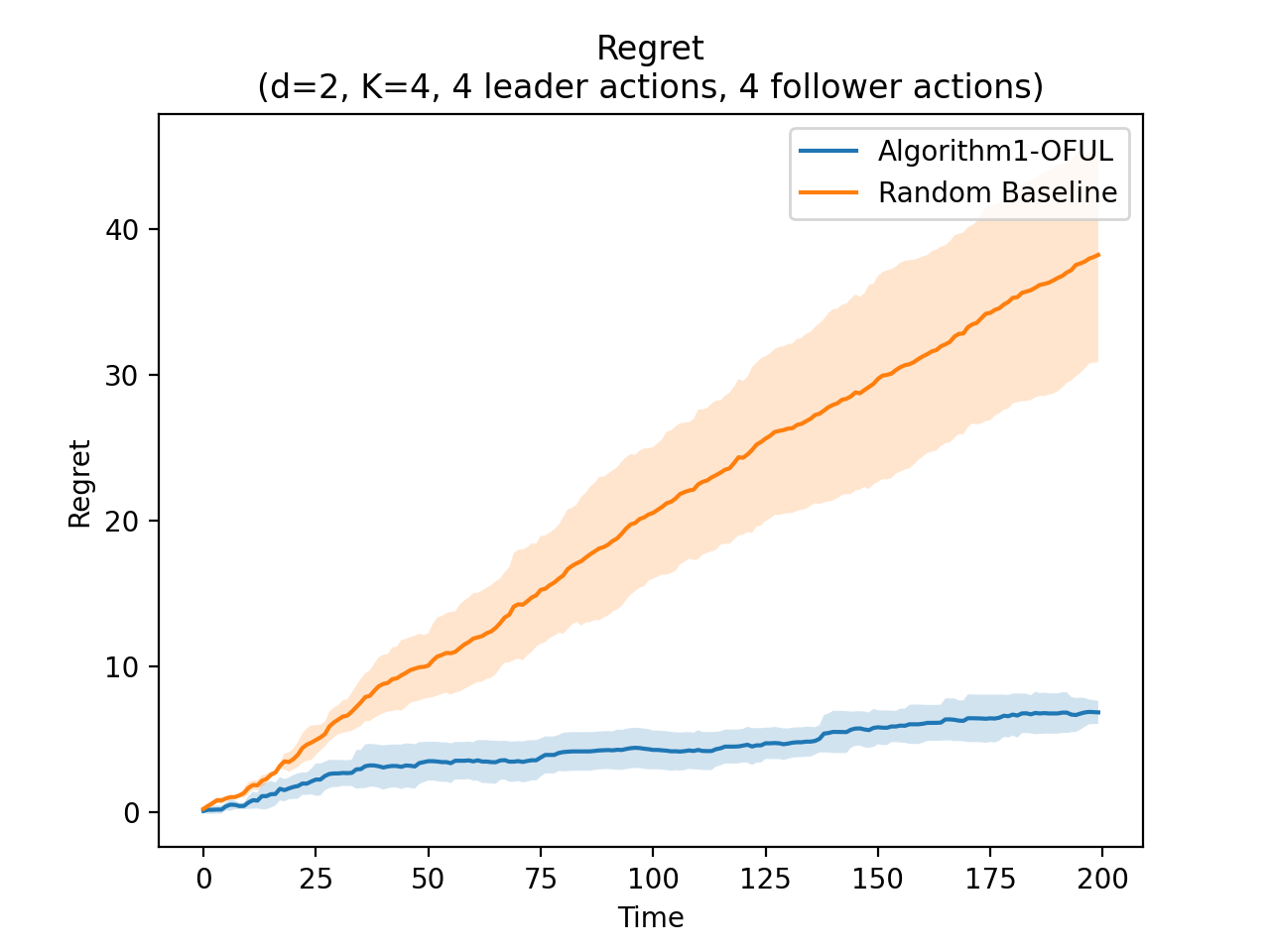}
     \end{subfigure}
     \caption{Left: Cumulative utility of the optimal policy,~\Cref{alg:meta} instantiated with OFUL (Alg1-OFUL), and the random baseline over $T=200$ rounds in a setting with $4$ follower types, where each player has $4$ actions and the context dimension is also $2$. Right: Cumulative regret of Alg1-OFUL and the random baseline over $T=200$ rounds in the same setting. %Algorithm $3$ of~\citet{harris2024regret} is not applicable in this setting because the follower's utility depends on the context.
     Results are averaged over $10$ runs.\looseness-1}\label{fig:both}
\end{figure}
We empirically evaluate the performance of~\Cref{alg:meta} instantiated with OFUL (henceforth \Cref{alg:meta}-OFUL) on synthetically-generated contextual Stackelberg games. 
In this setting, there are $4$ follower types, each of whose utility function is randomly generated. 
Note that since the followers' utilities depend on the contextual information, algorithms from previous work on bandit learning in Stackelberg games with side information (e.g. Algorithm 3 in~\citet{harris2024regret}) are not applicable in this setting. 
The leader's utility function is also random and is linear in the context, whose dimension is $d=2$. 
Both the leader and followers have $4$ actions. 
Finally, both the sequence of contexts and followers are generated stochastically. 

In~\Cref{fig:both} we plot the cumulative utility (left) and regret (right) of~\Cref{alg:meta}-OFUL and a random baseline over $T=200$ time-steps. 
In the Appendix, we compare the performance of~\Cref{alg:meta}-OFUL with that of Algorithm 3 in~\citet{harris2024regret} in the special case where follower utilities do \emph{not} depend on the side information. 
Even in this setting, we find that~\Cref{alg:meta}-OFUL significantly outperforms the other alternatives. 
\section{Other applications}\label{sec:other}
\Cref{alg:meta} leverages the fact that there are a finite number of follower types to transform the problem into the utility space of the leader, before applying an off-the-shelf linear contextual bandit algorithm. 
Interestingly, the only parts of~\Cref{alg:meta} that are specific to Stackelberg games are how the sets of extreme points and leader utilities are computed. 
As such, it is possible to apply~\Cref{alg:meta} to other settings where the learner has a finite number of possible utility functions. 
We highlight two such applications here: learning in auctions with side information (\Cref{sec:auctions}) and online Bayesian persuasion with side information (\Cref{sec:persuasion}). 
Despite the prevalence of side information in both auctions and persuasion, we are the first to study either setting, to the best of our knowledge.\looseness-1 

Since our definition of approximate extreme points $\cE_t$ is specific to Stackelberg games, we instead ensure that $|U_t| < \infty$ in both settings by discretizing the \emph{policy space}.
Specifically, in both auctions and persuasion we (re-)define $\cE_t$ to be $\{\pi^{(\vomega)}(\vz_t) \; : \; \vomega \in \Omega\}$, where $\Omega$ is a (finite) uniform grid and $\pi^{(\vomega)}$ is a policy parameterized by $\vomega$.
After bounding the discretization error, our analyses for the results in this section are analogous to those in~\Cref{sec:contexts} and~\Cref{sec:followers}. 

\subsection{Learning to bid in auctions with side information}\label{sec:auctions}
\citet{daskalakis2016learning} consider the problem of no-regret learning in a second-price auction setting where in each round $t \in [T]$, bidders simultaneously bid on a bundle of $m$ items. 
Taking the perspective of a single bidder, they play bid vector $\vb_t \in [0, 1]^m$ in round $t$ and receive the bundle of items $S(\vb_t, \vtheta_t) = \{j : \vb_t[j] \geq \vtheta_t[j]\}$, where $\vtheta_t \in \Theta \subset [0, 1]^m$ is a threshold vector corresponding to the item-wise maximum of the other players' bids. 
Having received bundle of items $S(\vb_t, \vtheta_t)$, the bidder receives utility $u(\vb_t, \vtheta_t) := v(S(\vb_t, \vtheta_t)) - \sum_{j \in S(\vb_t, \vtheta_t)} \vtheta[j]$, where $v(S(\vb_t, \vtheta_t)) \in \mathbb{R}$ is their \emph{valuation} for item bundle $S(\vb_t, \vtheta_t)$ and $\sum_{j \in S(\vb_t, \vtheta_t)} \vtheta[j]$ is the cumulative price of the items in $S(\vb_t, \vtheta_t)$. 
\citet{daskalakis2016learning} provide a no-regret learning algorithm for this setting when each threshold vector $\vtheta_t$ can take only one of $K$ different values (i.e. $|\Theta| = K$). 
In the bandit feedback setting, the threshold vector $\vtheta_t$ is never revealed to the learner. 

We apply a slightly more general version of~\Cref{alg:meta} (\Cref{alg:meta2}) to a generalization of this problem, where the bidder's valuation is allowed to depend on additional contextual information (i.e. $v : \cZ \times [0, 1]^m \times \Theta \rightarrow \mathbb{R}$). 
Such contextual information is often present in auction settings. 
For example, shoppers' valuations for bundles of clothing items often depend on external factors such as the season or current fashion trends. 

In this setting, utilities are now a function of the context $\vz_t$, the bid vector $\vb_t$, and the threshold vector $\vtheta_t$ and a policy is a mapping from contexts to bids for each item (i.e. $\pi : \cZ \rightarrow [0, 1]^m$). 
Instead of discretizing the learner's action space like in~\Cref{sec:reduction}, we instead discretize their policy space as follows.\looseness-1 
\begin{definition}[Discretized Policy for Auctions]
    Let $\Omega := \{\vomega \in \Delta^K, \; T \cdot \vomega[i] \in \mathbb{N}, \; \forall i \in [K]\}$. 
    We define policy $\pi^{(\vomega)}$ as $\pi^{(\vomega)}(\vz) := \arg\max_{\vb \in [0, 1]^m} \sum_{i=1}^K \vomega[i] \cdot u(\vz, \vb, \vtheta^{(i)})$
    and $\cE_t := \{\pi^{(\vomega)}(\vz_t) \; : \; \vomega \in \Omega\}$.\looseness-1
\end{definition}
Armed with this policy discretization, we are ready to state our results for running~\Cref{alg:meta2} in repeated auctions with side information. 
Analogous to~\Cref{def:reg}, we define regret to be the cumulative difference in utility between the optimal policy and the sequence of bid vectors played by the learner.\looseness-1 
\begin{restatable}{corollary}{auctionsone}
    When $U_t := \{\vu(\vz_t, \vb) \; : \; \vb \in \cE_{t} \}$ and $\cR$ is instantiated as the OFUL algorithm of~\citet{abbasi2011improved}, the expected regret of~\Cref{alg:meta2} is $\E[R(T)] = O(K\sqrt{T}\log(T))$
    %
    % \begin{equation*}
    %     \E[R(T)] = O(K\sqrt{T}\log(T))
    % \end{equation*}
    %
    when the sequence of contexts is chosen adversarially and the sequence of threshold vectors is chosen stochastically.\looseness-1 
\end{restatable}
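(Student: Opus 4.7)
The plan is to mirror the proof of~\Cref{thm:stack1}, with the only genuinely new work being a bound on the policy-space discretization error introduced by $\cE_t$. First I would \emph{linearize the expected utility}: because the threshold vector $\vtheta_t$ is drawn i.i.d.\ from some distribution $\vp^* \in \Delta^K$ over $\{\vtheta^{(1)},\dots,\vtheta^{(K)}\}$, one has
\begin{equation*}
    u(\vz_t,\vb_t,\vtheta_t) \;=\; \langle \vu(\vz_t,\vb_t),\,\vp^* \rangle + \epsilon_t,
\end{equation*}
where $\vu(\vz_t,\vb_t)\in\mathbb{R}^K$ has $i$-th coordinate $u(\vz_t,\vb_t,\vtheta^{(i)})$ and $\epsilon_t$ is a bounded zero-mean martingale difference conditional on $\vz_t$ and $\vb_t$. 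This is precisely the linear-contextual-bandit model OFUL is designed for, with unknown parameter $\vtheta^*=\vp^*$, time-varying action set $U_t\subseteq[-1,1]^K$, and sub-Gaussian noise.

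Next I would invoke the regret guarantee of~\citet{abbasi2011improved}: with $\cR=\text{OFUL}$ operating on $U_t$ of ambient dimension $K$, the pseudo-regret of $\cR$ against $\arg\max_{\vv\in U_t}\langle\vv,\vp^*\rangle$ is $\Tilde{O}(K\sqrt{T})$ with high probability. Since $|U_t|<\infty$, the argmax over $U_t$ is attained, and the reduction in~\Cref{alg:meta2} translates each recommended $\vv_t\in U_t$ back into a concrete bid vector $\vb_t\in\cE_t$ with the same inner-product value. This yields $\Tilde{O}(K\sqrt{T})$ regret against the best \emph{discretized} comparator in $\cE_t$.

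The main obstacle is the remaining \emph{discretization gap}: the benchmark $\pi^*(\vz_t)$ in the regret definition ranges over all of $[0,1]^m$, not over $\cE_t$. For a fixed context $\vz$, let $\vb^\star(\vz) := \arg\max_{\vb\in[0,1]^m}\langle\vu(\vz,\vb),\vp^*\rangle$ be the true Bayes-optimal bid. Since $\Omega$ is a $1/T$-uniform grid over $\Delta^K$, there exists $\vomega\in\Omega$ with $\|\vomega-\vp^*\|_\infty\le 1/T$, hence $\|\vomega-\vp^*\|_1\le K/T$. By definition of $\pi^{(\vomega)}$,
\begin{equation*}
    \langle \vu(\vz,\pi^{(\vomega)}(\vz)),\vomega\rangle \;\ge\; \langle \vu(\vz,\vb^\star(\vz)),\vomega\rangle,
\end{equation*}
and combining with $\|\vu(\vz,\cdot)\|_\infty$ bounded by the valuation range gives $\langle \vu(\vz,\pi^{(\vomega)}(\vz)),\vp^*\rangle\ge \langle \vu(\vz,\vb^\star(\vz)),\vp^*\rangle - O(K/T)$. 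Since $\pi^{(\vomega)}(\vz_t)\in\cE_t$, summing over $t$ contributes only $O(K)$. Finally I would close the loop by standard Azuma-type concentration to pass from the expected-utility comparator $\langle\vu(\vz_t,\vb^\star(\vz_t)),\vp^*\rangle$ to the in-hindsight comparator $u(\vz_t,\pi^*(\vz_t),\vtheta_t)$ appearing in $R(T)$; because $\vtheta_t$ is i.i.d.\ and contexts are adversarial, this step introduces only an additional $\Tilde{O}(\sqrt{T})$ term. Adding the OFUL regret, the $O(K)$ discretization loss, and the concentration term yields $\E[R(T)]=O(K\sqrt{T}\log T)$, as claimed. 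The delicate point will be verifying that the sub-Gaussian parameter of $\epsilon_t$ and the norm bound on $\vu(\vz_t,\vb)$ used by OFUL are both $O(1)$ (or at worst absorbed into the $K$ factor), since the valuation function $v$ is only assumed bounded.
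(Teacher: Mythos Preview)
Your proposal is correct and takes essentially the same route as the paper: both linearize via $\vu(\vz_t,\vb_t)$, bound the policy-discretization error by $O(K)$ using a grid point $\vomega\in\Omega$ with $\|\vomega-\vp^*\|_1=O(K/T)$, and then invoke OFUL's $\Tilde O(K\sqrt T)$ guarantee (deferring to the proof of \Cref{thm:stack1}). The only minor difference is that the paper works entirely in expectation against the Bayes-optimal comparator $\pi^*(\vz)=\arg\max_{\vb}\sum_i \vp^*[i]\,u(\vz,\vb,\vtheta^{(i)})$, so your final Azuma step is not needed for the stated $\E[R(T)]$ bound.
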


\begin{restatable}{corollary}{auctionstwo}
    When $U_t := \{\vu(\vz_t, \vb) \; : \; \vb \in \cE_{t} \}$ and $\cR$ is instantiated as the regret minimizer of~\Cref{alg:regret_minimizer},~\Cref{alg:meta2} obtains expected regret $\E[R(T)] = O(K^{2.5} \sqrt{T} \log(T))$
    %
    % \begin{equation*}
    %     \E[R(T)] = O(K^{2.5} \sqrt{T} \log(T))
    % \end{equation*}
    %
    when the sequence of contexts is chosen stochastically and the sequence of threshold vectors is chosen adversarially. 
\end{restatable}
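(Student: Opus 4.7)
The plan is to closely parallel the proof of~\Cref{thm:stack2}, substituting the auction-specific policy-grid discretization for the Stackelberg extreme-points construction.

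The first step is to linearize the bidder's utility. When $\vtheta_t$ lies in the finite set $\{\vtheta^{(1)},\ldots,\vtheta^{(K)}\}$, one can write
\begin{equation*}
u(\vz_t,\vb_t,\vtheta_t)=\langle\vu(\vz_t,\vb_t),\mathbf{1}_{\vtheta_t}\rangle,
\end{equation*}
where $\vu(\vz_t,\vb_t)\in\mathbb{R}^K$ has $i$-th component $u(\vz_t,\vb_t,\vtheta^{(i)})$ and $\mathbf{1}_{\vtheta_t}$ is the one-hot encoding of the realized threshold type. Thus the regret minimizer $\cR$ inside~\Cref{alg:meta2} faces a linear contextual bandit instance with action set $U_t=\{\vu(\vz_t,\vb):\vb\in\cE_t\}$ and adversarially chosen loss direction $\mathbf{1}_{\vtheta_t}$.

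Next, the expected regret splits into a discretization term and an inner bandit term. Let $\hat{\vomega}:=\tfrac{1}{T}\sum_t\mathbf{1}_{\vtheta_t}$. Taking expectation over the i.i.d. contexts, the best unrestricted policy in hindsight satisfies $\pi^{*}(\vz)=\arg\max_{\vb}\langle\hat{\vomega},\vu(\vz,\vb)\rangle=\pi^{(\hat{\vomega})}(\vz)$. Rounding $\hat{\vomega}$ coordinate-wise to the nearest multiple of $1/T$ and correcting so that the result stays in $\Delta^K$ produces some $\vomega^{*}\in\Omega$ with $\|\vomega^{*}-\hat{\vomega}\|_1\le K/T$. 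A short argument that chains the defining maximizations of $\pi^{(\hat{\vomega})}$ and $\pi^{(\vomega^{*})}$ with the boundedness of $\vu$ then bounds the per-round expected utility gap between $\pi^{(\hat{\vomega})}$ and $\pi^{(\vomega^{*})}$ by $O(K/T)$, so the cumulative discretization regret is $O(K)$.

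For the inner bandit regret, the task is to verify the hypotheses of logdet-FTRL from~\citet{liu2024bypassing}. Because contexts are drawn i.i.d. and $U_t$ is a deterministic function of $\vz_t$ through $\cE_t$, the action-set sequence $\{U_t\}$ is itself i.i.d., exactly matching the stochastic-contexts/adversarial-losses regime of logdet-FTRL. After the $1/\sqrt{K}$ rescaling inside~\Cref{alg:regret_minimizer} embeds the actions in the unit ball of $\mathbb{R}^K$ and converts utilities into losses, the logdet-FTRL regret bound on the rescaled instance, translated back through the inverse $\sqrt{K}$ factors on both the action and the loss, yields $O(K^{2.5}\sqrt{T}\log T)$ regret against $\pi^{(\vomega^{*})}$, exactly as in the proof of~\Cref{thm:stack2}. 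Adding the $O(K)$ discretization error completes the bound.

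The main obstacle is the discretization step, which is structurally different from the one in~\Cref{sec:followers}: the auction setting has no follower-best-response polytope to round to, so one must instead argue that policies maximizing $\langle\vomega,\vu(\vz,\cdot)\rangle$ for $\ell_1$-close $\vomega$ induce only a uniformly small gap in expected per-context utility. The remaining steps—checking the i.i.d. property of $\{U_t\}$ and bookkeeping the $\sqrt{K}$ rescaling through the logdet-FTRL guarantee—follow the template of~\Cref{thm:stack2} essentially without modification.
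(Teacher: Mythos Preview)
Your approach is correct and follows the same reduction to logdet-FTRL as the paper. However, you do more work than necessary in the discretization step. Since the empirical threshold distribution $\hat{\vomega}$ has coordinates $N_i/T$ with $N_i\in\mathbb{N}$ and $\sum_i N_i=T$, it already lies in the grid $\Omega$ \emph{exactly}; hence $\pi^{*}=\pi^{(\hat{\vomega})}$ is itself one of the discretized policies, $\pi^{*}(\vz_t)\in\cE_t$ for every $t$, and the discretization error is zero. The paper's proof makes precisely this observation and therefore omits the rounding argument altogether. Your $O(K)$ bound on the discretization error is valid but superfluous, and the ``main obstacle'' you identify is in fact not present.
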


\subsection{Bayesian persuasion with public and private states}\label{sec:persuasion}
Bayesian persuasion (BP;~\citet{kamenica2011bayesian}) is a canonical setting in information design which studies how provision of information by an informed designer (the \emph{sender}) influences the strategic behavior of agents (\emph{receivers}) in a game. 

We study a generalization of the \emph{online} BP setting, in which a sender learns to play against a sequence of $T$ receivers, which was first introduced by~\citet{castiglioni2020online}.
The novelty in our setting is that a context $\vz_t \in \cZ$ is revealed to both the sender and receiver in each round $t \in [T]$. 
This context may be thought of as a ``public state'', which contains contextual information that is available to both players. 
After observing the context, the sender commits to a \emph{signaling policy} $\mu : \Omega \rightarrow \cA$, which maps \emph{private} states from some finite set $\Omega$ to receiver actions in finite set $\cA$.\footnote{This is without loss of generality due to a revelation principle-style argument (see, e.g.~\citet{kamenica2011bayesian}).}%\footnote{We assume that $|\Omega| < \infty$ and $|\cA| < \infty$.}
The private state is drawn from a publicly-known prior distribution and revealed to the sender (but not the receiver). 
After the private state is realized, the sender signals according to their policy and the follower takes an action (possibly different from the one recommended to them by the sender). 

The sender faces a sequence of receivers $r_1, \ldots, r_T$, where each receiver $r_t$ is one of $K$ types $\{\tau^{(1)}, \ldots, \tau^{(K)}\}$. 
Our notion of receiver type is analogous to our definition of follower type in~\Cref{sec:background}, i.e. each receiver type has a different utility function which maps contexts, private states, and receiver actions to utilities. 
As is standard in most BP settings, we assume that receivers are Bayes-rational and pick their action to maximize their expected utility with respect to the posterior distribution over states induced by the sender's signal realization. 

It is possible to show that the set of leader signaling policies can be represented by a convex polytope $\cP$ (see, e.g. Section 4 in~\citet{bernasconi2023optimal}). 
As such, the leader can solve for the optimal signaling policy to play given a context $\vz_t$ and distribution over receiver types by optimizing over $\cP$. 
The leader's goal is to maximize their own cumulative utility $u : \cZ \times \cP \times \{\tau^{(1)}, \ldots \tau^{(K)}\} \rightarrow [-1, 1]$, which is a function of the context (i.e. public state), the private state, and the receiver's type (through the action they take). 
Under bandit feedback, the sequence of receiver types $r_1, \ldots, r_T$ is never revealed to the sender. 

We discretize the policy space analogously to~\Cref{sec:auctions}; the only difference is the form of the leader's utility function and the action space they are optimizing over. 
\begin{definition}[Discretized Policy for Persuasion]
    Let $\Omega := \{\vomega \in \Delta^K, \; T \cdot \vomega[i] \in \mathbb{N}, \; \forall i \in [K]\}$. 
    We define policy $\pi^{(\vomega)}$ as $\pi^{(\vomega)}(\vz) := \arg\max_{\mu \in \cP} \sum_{i=1}^K \vomega[i] \cdot u(\vz, \mu, \tau^{(i)})$
    and $\cE_t := \{\pi^{(\vomega)}(\vz_t) \; : \; \vomega \in \Omega\}$.\looseness-1
\end{definition}
We obtain results for two persuasion settings with side information: one in which the sequence of public states is chosen adversarially and the receiver types are chosen stochastically, and one where the sequence of contexts is stochastic and the follower types are chosen stochastically. 

\begin{corollary}
    When $U_t := \{\vu(\vz_t, \mu) \; : \; \mu \in \cE_{t} \}$ and $\cR$ is instantiated as the OFUL algorithm of~\citet{abbasi2011improved}, the expected regret of~\Cref{alg:meta} is $\E[R(T)] = O(K\sqrt{T}\log(T))$
    %
    % \begin{equation*}
    %     \E[R(T)] = O(K\sqrt{T}\log(T))
    % \end{equation*}
    %
    when the sequence of public states is chosen adversarially and the sequence of receiver types is chosen stochastically. 
\end{corollary}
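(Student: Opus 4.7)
The plan is to mirror the proof of~\Cref{thm:stack1}, adapting the two persuasion-specific ingredients---the linearization of the sender's utility and the policy discretization---to the Bayesian persuasion setting. Since receiver type $r_t$ is drawn i.i.d.\ from the unknown distribution $\vp^* \in \Delta^K$, the per-round observed utility $u_t := u(\vz_t, \mu_t, r_t) \in [-1,1]$ satisfies
\[
\E[u_t \mid \vz_t, \mu_t] \;=\; \sum_{i=1}^K \vp^*[i]\, u(\vz_t, \mu_t, \tau^{(i)}) \;=\; \langle \vu(\vz_t, \mu_t), \vp^* \rangle,
\]
so $u_t = \langle \vu(\vz_t, \mu_t), \vp^* \rangle + \epsilon_t$ for a zero-mean noise $\epsilon_t \in [-2,2]$. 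This exactly matches the linear contextual bandit template that OFUL assumes, with unknown parameter $\vtheta^* = \vp^*$ and per-round finite action set $U_t$. Invoking OFUL with $\|\vu(\vz_t,\mu)\|_2 \leq \sqrt{K}$ and $\|\vp^*\|_2 \leq 1$ then gives $\Tilde{O}(K\sqrt{T})$ regret against $\arg\max_{\vv \in U_t}\langle \vv, \vp^*\rangle$, i.e., against the best signaling policy in $\cE_t$.

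Next I would control the discretization error from restricting the sender to policies of the form $\pi^{(\vomega)}$. Because $\Omega$ is the uniform $1/T$-grid on the simplex, for any $\vp^* \in \Delta^K$ there exists $\vomega \in \Omega$ with $\|\vp^* - \vomega\|_\infty \leq 1/T$, hence $\|\vp^* - \vomega\|_1 \leq K/T$. Using $\|\vu(\vz,\mu)\|_\infty \leq 1$ together with the fact that $\pi^{(\vomega)}(\vz_t)$ is Bayes-optimal against $\vomega$ on $\cP$, a standard two-step comparison yields
\[
\langle \vu(\vz_t, \pi^{(\vomega)}(\vz_t)), \vp^* \rangle \;\geq\; \max_{\mu \in \cP}\langle \vu(\vz_t, \mu), \vp^* \rangle - \frac{2K}{T}.
\]
Summing over $t \in [T]$ shows that the cumulative gap between the best policy in $\cP$ and the best policy in $\cE_t$ is $O(K)$, which is absorbed into the final bound.

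Putting the two pieces together yields the claimed $\E[R(T)] = O(K\sqrt{T}\log T)$. The main obstacle I expect is verifying that the policy discretization is rich enough to preserve near-optimality uniformly in the context $\vz_t$; the argument above succeeds precisely because $\pi^{(\vomega)}$ is \emph{defined} as an unrestricted Bayes best-response to $\vomega$ on $\cP$, so the $1/T$-grid on $\Delta^K$ translates directly into an $O(K/T)$ per-round approximation gap via linearity of the sender's expected utility in $\vp^*$. A secondary technicality is the $[-2,2]$ noise range (versus $[-1,1]$ in the Stackelberg case), but this only affects the constant inside OFUL's confidence width and leaves the $\Tilde{O}(K\sqrt{T})$ rate unchanged.
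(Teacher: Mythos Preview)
Your proposal is correct and follows essentially the same approach as the paper: the paper does not write out a proof for this persuasion corollary, but the proof of the analogous auctions result (the restatable \texttt{auctionsone}) performs the same two steps---a four-term decomposition that bounds the discretization error from the $1/T$-grid $\Omega$ by $2K$, followed by an appeal to the OFUL argument of \Cref{thm:stack1}---and the persuasion corollary is handled identically with $\cP$ in place of $[0,1]^m$. Your two-step comparison for the discretization gap is exactly the content of the paper's decomposition, and your linearization with zero-mean bounded noise is the same reduction to OFUL used in \Cref{thm:stack1}.
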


\begin{corollary}
    When $U_t := \{\vu(\vz_t, \mu) \; : \; \mu \in \cE_{t} \}$ and $\cR$ is instantiated as the regret minimizer of~\Cref{alg:regret_minimizer},~\Cref{alg:meta} obtains expected regret $\E[R(T)] = O(K^{2.5} \sqrt{T} \log(T))$
    %
    % \begin{equation*}
    %     \E[R(T)] = O(K^{2.5} \sqrt{T} \log(T))
    % \end{equation*}
    %
    when the sequence of public states is chosen stochastically and the sequence of receiver types is chosen adversarially.\looseness-1 
\end{corollary}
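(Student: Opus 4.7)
The plan is to reduce this corollary directly to the argument proving \Cref{thm:stack2}, after two persuasion-specific checks: (i) that the sender's utility admits the same linear-in-utility-vector form used in \Cref{sec:reduction}, and (ii) that the policy discretization $\cE_t$ induced by the grid $\Omega$ loses only lower-order regret relative to the unrestricted optimal sender policy in $\cP$.

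First I would rewrite the sender's expected utility in the canonical linear form. For any signaling policy $\mu \in \cP$, context $\vz_t$, and receiver type $r_t \in \{\tau^{(1)}, \ldots, \tau^{(K)}\}$, define $\vu(\vz_t, \mu) \in [-1,1]^K$ by $\vu(\vz_t, \mu)_i := u(\vz_t, \mu, \tau^{(i)})$ (expectation taken over the private state and any internal randomness of $\mu$). Then $u(\vz_t, \mu, r_t) = \langle \vu(\vz_t, \mu), \mathbf{1}_{r_t} \rangle$, so at each round the sender's utility is a linear function of the action $\vv_t = \vu(\vz_t, \mu_t) \in U_t$, with an adversarially chosen one-hot loss vector. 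Because $\vz_t$ is drawn i.i.d. from a fixed distribution and $\cE_t$ is a deterministic function of $\vz_t$, the action sets $\{U_t\}_{t\in[T]}$ are i.i.d.~draws from a fixed distribution, satisfying the input requirement of logdet-FTRL; finiteness of $|U_t| \le |\Omega| = O(T^K)$ ensures finite per-round runtime. The $1/\sqrt{K}$ rescaling in \Cref{alg:regret_minimizer} places $U'_t$ inside the unit ball, and negating utilities turns the problem into an instance of adversarial linear contextual bandits over the ball. Invoking the guarantee of \citet{liu2024bypassing} exactly as in the proof of \Cref{thm:stack2} yields expected regret $O(K^{2.5}\sqrt{T}\log T)$ against the best fixed vector in $U_t$, which corresponds to the best discretized policy $\pi^{(\vomega)}$ with $\vomega \in \Omega$.

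It then remains to bound the discretization gap against the true in-hindsight optimum over $\cP$. Unfolding \Cref{def:reg}, the hindsight-optimal sender action at each realization of $\vz$ maximizes $\langle \vu(\vz, \mu), \hat{\vp}_{\vz}\rangle$ where $\hat{\vp}_{\vz} \in \Delta^K$ is the empirical distribution of receiver types among rounds with context $\vz$. By a standard $L_1$ rounding of $\hat{\vp}_{\vz}$ onto the uniform grid, there exists $\vomega \in \Omega$ with $\|\vomega - \hat{\vp}_{\vz}\|_1 \le K/T$. Because $\pi^{(\vomega)}(\vz)$ maximizes the same bilinear objective with weights $\vomega$ and $\vu(\vz, \cdot) \in [-1,1]^K$, Holder's inequality gives a per-round value gap of at most $2K/T$ between $\pi^{(\vomega)}(\vz)$ and the optimal $\mu^*(\vz) \in \cP$. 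Summed over $T$ rounds this contributes $O(K)$ to the regret, absorbed into the leading $K^{2.5}\sqrt{T}\log T$ term.

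The main obstacle is the second step: the discretization argument depends on $\pi^{(\vomega)}$ being a near-optimizer of the objective weighted by $\hat{\vp}_{\vz}$, which is not the objective it was defined to maximize. The key ingredient is that the sender's objective is linear in the weight vector, so a perturbation of $\vomega$ translates into a uniformly small perturbation of the value; the two-term cross comparison (bounding the value of $\pi^{(\vomega)}$ under $\hat{\vp}_{\vz}$ against the value of $\mu^*$ under $\vomega$, and then against $\mu^*$ under $\hat{\vp}_{\vz}$) is the standard device. Everything else is bookkeeping identical to Sections~\ref{sec:reduction} and~\ref{sec:auctions}, with $\cE_t$ now indexing discretized signaling policies rather than bid vectors.
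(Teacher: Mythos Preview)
Your proof is correct and follows the same overall template as the paper (linearize in the utility space, feed the rescaled action sets to logdet-FTRL, then control the discretization gap between $\cP$ and $\cE_t$). The only substantive difference is in how you handle the discretization step.

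The paper's argument (via the analogous auctions corollary, \textsc{auctionstwo}) is that there is \emph{no} discretization loss at all: the empirical distribution of receiver types over the whole horizon, $\bigl(\tfrac{N_1}{T},\ldots,\tfrac{N_K}{T}\bigr)$ with $N_i=\sum_{t}\mathbbm{1}\{r_t=\tau^{(i)}\}$, has integer numerators summing to $T$ and therefore already lies in $\Omega$. Consequently $\pi^{(N/T)}$ is one of the discretized policies, and the restricted-optimal policy $\pi^{(\cE)}$ used in the proof of \Cref{thm:stack2} attains exactly the comparator's value without any rounding. You instead round the \emph{per-context} empirical distribution $\hat{\vp}_{\vz}$ onto $\Omega$ and pay $O(K)$ in total via H\"older. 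Both routes give the stated $O(K^{2.5}\sqrt{T}\log T)$ bound; the paper's observation is simpler and avoids the rounding argument, while your version stays more literally faithful to the conditional-on-$\vz$ form of the comparator in \Cref{def:reg}. One small wording issue: you describe the logdet-FTRL benchmark as ``the best discretized policy $\pi^{(\vomega)}$ with $\vomega\in\Omega$'', which reads as a single fixed $\vomega$, whereas your discretization step (correctly) uses a context-dependent $\vomega_{\vz}$; the contextual-bandit guarantee is against the best policy mapping each $U_t$ to an action in $U_t$, which is what you actually need.
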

\section{Conclusion}\label{sec:conc}
We study the problem of bandit learning in Stackelberg games with side information, where we improve upon the previously best-known $\Tilde{O}(T^{2/3})$ regret rates to $\Tilde{O}(T^{1/2})$. 
Our results rely on a reduction to linear contextual bandits in the leader's utility space. 
Extensions to unknown leader utilities, auctions with side information, and Bayesian persuasion with public and private states are also considered.\looseness-1 

There are several exciting directions for future work. 
While our results for known utilities extend to auctions and persuasion, our results for unknown utilities do not. 
It would be interesting to see if~\Cref{alg:meta} can be (further) generalized to handle such settings. 
Given the exponential worst-case computational complexity of~\Cref{alg:meta}, a more in depth study of its runtime using tools from, e.g. smoothed analysis~\cite{spielman2004smoothed} would also be interesting.\looseness-1 

% \section*{Reproducibility statement}
% %
% Full proofs are included in the Appendix, and our code is uploaded as part of the supplementary materials.

\section*{Acknowledgments}
We would like to thank the anonymous reviewers for their helpful feedback. For part of this work, KH was a Ph.D. student at Carnegie Mellon University. 
This work was supported in part by NSF Grant CCF-1910321, an ERC grant (Project 101165466 — PLA-STEER), the FAIR (Future Artificial Intelligence Research) project PE0000013, and the EU Horizon project ELIAS (No. 101120237).

The work of Martino Bernasconi and AC was partially funded by the European Union. 
Views and opinions expressed are however those of the author(s) only and do not necessarily reflect those of the European Union or the European Research Council Executive Agency. 
Neither the European Union nor the granting authority can be held responsible for them.

\bibliography{refs}
\bibliographystyle{iclr2026/iclr2026_conference}

\appendix\newpage
\section{Appendix for~\Cref{sec:reduction}: A reduction to linear contextual bandits}\label{app:reduction}
\begin{algorithm}
\caption{Regret Minimizer $\Tilde{\cR}$}
\label{alg:regret_minimizer}

Let $\cR'$ be logdet-FTRL (Algorithm 1 of~\citet{liu2024bypassing})

\SetKwFunction{Recommend}{Recommend}
\SetKwFunction{ObserveUtility}{ObserveUtility}

\Recommend{$U_t$}:\\
\Begin{
    Create scaled action set $U'_t = \left\{\frac{\vv}{\sqrt{K}} : \vv \in U_t \right\}$\;
    $\vv'_t = \cR'.\rec(U'_t)\;$\\
    \Return $\sqrt{K} \cdot \vv'_t$\;
    }

\ObserveUtility{$\vv_t$, $u_t$}:\\
\Begin{
    Set $\vv_t' = \frac{\vv_t}{\sqrt{K}}$ and $u_t' = -\frac{u_t}{\sqrt{K}}$\;
    Call $\cR'.\mathrm{observeLoss}(\vv_t', u_t')$\;
    }

\end{algorithm}

\stackone*
\begin{proof}
    Let $\vp^*\in\Delta(K)$ be the distribution over follower types. 
    Define $\vu(\vz,\vx)$ is a vector in $\mathbb{R}^{K}$ where for each $k\in[K]$:
    \[
    \vu(\vz,\vx)[k] = u(\vz, \vx, b_k(\vz, \vx))
    \]
    Observe that for a fixed $\vz$, $\vx$, we have that 
    \begin{equation*}
        u(\vz, \vx, b_{f_t}(\vz, \vx)) = \langle \vp^*, \mathbf{u}(\vz, \vx) \rangle + (u(\vz, \vx, b_{f_t}(\vz, \vx)) - \langle \vp^*, \mathbf{u}(\vz, \vx) \rangle)
    \end{equation*}
    Let $\eta_t := u(\vz, \vx, b_{f_t}(\vz, \vx)) - \langle \vp^*, \mathbf{u}(\vz, \vx) \rangle$. 
    Observe that since $\E[u(\vz, \vx, b_{f_t}(\vz, \vx))] = \langle \vp^*, \mathbf{u}(\vz, \vx) \rangle$, $\eta_t$ is a zero-mean random variable bounded in $[-2, 2]$. 
    Similarly we have that for $a_l \sim \vx$,
    \begin{equation*}
        u(\vz, a_l, b_{f_t}(\vz, \vx)) = u(\vz, \vx, b_{f_t}(\vz, \vx)) + (u(\vz, a_l, b_{f_t}(\vz, \vx)) - u(\vz, \vx, b_{f_t}(\vz, \vx))),
    \end{equation*}
    where $\gamma_t := u(\vz, a_l, b_{f_t}(\vz, \vx)) - u(\vz, \vx, b_{f_t}(\vz, \vx))$ is a zero-mean random variable bounded in $[-2, 2]$. 
    Putting both terms together, we have that 
    \begin{equation*}
        u(\vz, a_l, b_{f_t}(\vz, \vx)) = \langle \vp^*, \mathbf{u}(\vz, \vx) \rangle + \epsilon_t,
    \end{equation*}
    where $\epsilon_t := \eta_t + \gamma_t$ is a zero-mean random variable bounded in $[-4, 4]$. 
    \begin{equation*}
    \begin{aligned}
        \E[R(T)] &= \E_{f_1, \ldots, f_T}[\sum_{t=1}^T u(\vz_t, \pi^*(\vz_t), b_{f_t}(\vz_t, \pi^*(\vz_t))) - u(\vz_t, \vx_t, b_{f_t}(\vz_t, \vx_t))]\\
        &\leq 1 + \sum_{t=1}^T \E_{f_1, \ldots, f_t}[u(\vz_t, \pi^{(\cE)}(\vz_t), b_{f_t}(\vz_t, \pi^{(\cE)}(\vz_t))) - u(\vz_t, \vx_t, b_{f_t}(\vz_t, \vx_t))]\\
        &= 1 + \sum_{t=1}^T \E_{f_1, \ldots, f_{t-1}}[\E_t[u(\vz_t, \pi^{(\cE)}(\vz_t), b_{f_t}(\vz_t, \pi^{(\cE)}(\vz_t)))] - \E_t[u(\vz_t, \vx_t, b_{f_t}(\vz_t, \vx_t))]]\\
        &= 1 + \sum_{t=1}^T \langle \vp^*, \vu(\vz_t, \pi^{(\cE)}(\vz_t)) \rangle - \langle \vp^*, \E_{f_1, \ldots, f_{t-1}}[\vu(\vz_t, \vx_t)]\rangle\\
        &= 1 + \E_{f_1, \ldots, f_{T}}[\sum_{t=1}^T \langle \vp^*, \vu(\vz_t, \pi^{(\cE)}(\vz_t)) \rangle - \langle \vp^*, \vu(\vz_t, \vx_t)\rangle]\\
        &\leq 2 + 4\sqrt{TK \log(\lambda + T)}(\sqrt{\lambda K} + 4\sqrt{2\log(T) + K \log(1 + T/\lambda)})
        %
        % &= 1 + \sum_{t=1}^T \langle \vp^*, \vu(\vz_t, \pi^{(\cE)}(\vz_t)) \rangle - \langle \vp^*, \vu(\vz_t, \vx_t)\rangle + \langle \vp^*, \vu(\vz_t, \vx_t)\rangle - \langle \vp^*, \E_{f_1, \ldots, f_{t-1}}[\vu(\vz_t, \vx_t)]\rangle\\
        % %
        % &\leq 1 + 4\sqrt{TK \log(\lambda + T)}(\sqrt{\lambda K} + 4\sqrt{2\log(1/\delta) + K \log(1 + T/\lambda)})\\
        % %
        % &+ \sum_{t=1}^T \langle \vp^*, \vu(\vz_t, \vx_t)\rangle - \langle \vp^*, \E_{f_1, \ldots, f_{t-1}}[\vu(\vz_t, \vx_t)]\rangle\\ 
    \end{aligned}
    \end{equation*}
    where $\pi^{(\cE)}$ is the optimal policy which is restricted to $\cE_t$ in round $t$, the second line follows from Lemma 4.4 in~\citet{harris2024regret} and the last line follows from applying the regret guarantee of Algorithm 1 in~\citet{abbasi2011improved}.
\end{proof}

\stacktwo*
\begin{proof}
    \begin{equation*}
    \begin{aligned}
        \E[R(T)] &= \E_{\vz_1, \ldots, \vz_T}[\sum_{t=1}^T u(\vz_t, \pi^*(\vz_t), b_{f_t}(\vz_t, \pi^*(\vz_t))) - u(\vz_t, \vx_t, b_{f_t}(\vz_t, \vx_t))]\\
        &\leq 1 + \E_{\vz_1, \ldots, \vz_T}[\sum_{t=1}^T u(\vz_t, \pi^{(\cE)}(\vz_t), b_{f_t}(\vz_t, \pi^{(\cE)}(\vz_t))) - u(\vz_t, \vx_t, b_{f_t}(\vz_t, \vx_t))]\\
        &= 1 + \E_{\vz_1, \ldots, \vz_T}[\sum_{t=1}^T \langle \vu(\vz_t, \pi^{(\cE)}(\vz_t)), \mathbf{1}_{f_t} \rangle - \langle \vu(\vz_t, \vx_t), \mathbf{1}_{f_t} \rangle]\\
        %
        %&= 1 + \E_{\vz_1, \ldots, \vz_T}[\sum_{t=1}^T \langle \vu(\vz_t, \pi^{(\cE)}(\vz_t)), \mathbf{1}_{f_t} \rangle - \langle \vv_t^{(j_t)}, \mathbf{1}_{f_t} \rangle]\\
        %
        &= 1 + \E_{\vz_1, \ldots, \vz_T}[\sum_{t=1}^T \langle \vu(\vz_t, \pi^{(\cE)}(\vz_t)), \mathbf{1}_{f_t} \rangle - \langle \vv_t, \mathbf{1}_{f_t} \rangle]\\
        &= 1 + \E_{\vz_1, \ldots, \vz_T}[\sum_{t=1}^T \langle \Tilde{\pi}(U_t), \mathbf{1}_{f_t} \rangle - \langle \vv_t, \mathbf{1}_{f_t} \rangle]\\
        &= 1 + \sqrt{K} \cdot \E_{\vz_1, \ldots, \vz_T}[\sum_{t=1}^T \langle \frac{\Tilde{\pi}(U_t)}{\sqrt{K}}, \mathbf{1}_{f_t} \rangle - \langle \frac{\vv_t}{\sqrt{K}}, \mathbf{1}_{f_t} \rangle]\\
        &= O(K^{2.5} \sqrt{T} \log(T))
    \end{aligned}
    \end{equation*}
    where $\pi^{(\cE)}$ is the optimal policy which is restricted to $\cE_t$ in round $t$, $\Tilde{\pi}(U_t) := \vu(\vz_t, \pi^{(\cE)}(\vz_t))$, the second line follows from Lemma 4.4 in~\citet{harris2024regret} and the last line follows from the regret guarantee of Algorithm 1 in~\citet{liu2024bypassing}. 
    To apply this result, we use the fact that the $K$-dimensional unit cube with side length $2$ is contained in the $K$-dimensional unit ball with radius $\sqrt{K}$.
\end{proof}

\reform*

\begin{proof}
    \begin{equation*}
    \begin{aligned}
        \E_{f \sim \gamma}[u(\vz, \vx, b_{f}(\vz, \vx))] &= \sum_{i=1}^K u(\vz, \vx, b_{i}(\vz, \vx)) \mathbb{P}_{\gamma}(f = i)\\
        &= \sum_{i=1}^K \sum_{a_l \in \cA_l} \sum_{a_f \in \cA_f} \vz^{\top} \vx[a_l] \mathbbm{1}\{a_f = b_{i}(\vz, \vx)\}  U(a_l, a_f) \mathbb{P}_{\gamma}(f = i)
    \end{aligned}
    \end{equation*}

    Let $i \in [K]$, $a_l \in \cA_f$, $a_f \in \cA_f$, and $j \in [d]$. 
    Define 
    \begin{equation*}
        n(i, a_l, a_f, j) := (i-1) \cdot (A_l \cdot A_f \cdot d) + (a_l - 1) \cdot (A_f \cdot d) + (a_f - 1) \cdot d + j
    \end{equation*}
    
    Let $\vtheta_{i,a_l,a_f} := U(a_l, a_f) \mathbb{P}_{\gamma}(f = i) \in \mathbb{R}^d$ and define $\vtheta \in \mathbb{R}^{d \times K \times A_l \times A_f}$ such that 
    \begin{equation*}
        \vtheta[n(i, a_l, a_f, j)] := \vtheta_{i,a_l,a_f}[j].
    \end{equation*}
    Similarly, let 
    \begin{equation*}
        \vh(\vz, \vx)[n(i, a_l, a_f, j)] := \vz[j] \vx[a_l] \mathbbm{1}\{a_f = b_{i}(\vz, \vx)\}.
    \end{equation*}
\end{proof}
\subsection{Appendix for~\Cref{sec:expts}: Experiments}

\begin{figure}
     \centering
     \begin{subfigure}[b]{0.49\textwidth}
         \centering
         \includegraphics[width=\textwidth]{figures/baseline_applicable.png}
         \caption{Cumulative utility of~\Cref{alg:meta} instantiated with OFUL (Algorithm1-OFUL), Algorithm $3$ of~\citet{harris2024regret}, and the random baseline over $T=2,000$ rounds in a setting with $5$ follower types, where each player has $3$ actions and the context dimension is also $3$. Results are averaged over $10$ runs. The hyperparameter of Algorithm $3$ of~\citet{harris2024regret} was tuned to maximize performance.}
         \label{fig:baseline_applicable}
     \end{subfigure}
     \hfill
     \begin{subfigure}[b]{0.49\textwidth}
         \centering
         \includegraphics[width=\textwidth]{figures/baseline_not.png}
         \caption{Cumulative utility of~\Cref{alg:meta} instantiated with OFUL (Algorithm1-OFUL) and the random baseline over $T=2,000$ rounds in a setting with $4$ follower types, where each player has $4$ actions and the context dimension is also $4$. Results are averaged over $10$ runs. Algorithm $3$ of~\citet{harris2024regret} is not applicable in this setting because the follower's utility depends on the context.}
         \label{fig:baseline_not}
     \end{subfigure}
     \caption{Additional empirical results.}
\end{figure}

Here we compare to Algorithm 3 in~\citet{harris2024regret} (henceforth Barycentric Explore-Then-Commit), which obtains $\Tilde{O}(T^{2/3})$ regret in the special case where the follower's utility does not depend on the context. 
At a high level, Barycentric Explore-Then-Commit repeatedly plays a small number of leader mixed strategies to estimate the frequency of follower best-responses, before acting greedily with respect to these estimates for the remaining rounds. 
We also compare both algorithms to a baseline which plays by sampling leader mixed strategies uniformly-at-random in each round (henceforth Random Baseline). 

In~\Cref{fig:baseline_applicable}, we compare the performance of the three algorithms on synthetic data. 
There are $5$ follower types, each of whose utility function is randomly generated and does \emph{not} depend on the contextual information. 
The leader's utility function is also random and is linear in the context, whose dimension is $d=3$. 
Both the leader and followers have $3$ actions. 
Finally, both the sequence of contexts and followers are generated stochastically. 

In~\Cref{fig:baseline_not}, we compare the performance of~\Cref{alg:meta}-OFUL with that of Random Baseline in a setting where follower utilities \emph{do} depend on contextual information. 
As a result, Barycentric Explore-Then-Commit is not applicable. 
In this setting, both leader and follower utility functions are random linear functions of the context player actions. 
$d=K=4$, and both players have $4$ actions. 
We find that in both settings~\Cref{alg:meta}-OFUL significantly outperforms Random Baseline and Barycentric Explore-Then-Commit (where applicable). 
\section{Appendix for~\Cref{sec:other}: Other applications}

\begin{algorithm}[t]
        \SetAlgoNoLine
        %\KwIn{$\widehat{\vp}_1$}
        \textbf{Input:} Linear contextual bandit algorithm $\cR$\\
        \For{$t = 1, \ldots, T$}
        {
            Observe $\vz_t$, compute utility set $U_t$\\
            Let $\vv_t \leftarrow \cR.\rec(U_t)$\\
            Play the action which induces $\vv_t$\\ 
            Receive utility $u_t$ and call $\cR.\obs(\vv_t, u_t)$ 
        }
        \caption{Reduction for Auctions and Persuasion}
        \label{alg:meta2}
\end{algorithm}

\auctionsone*
\begin{proof}
    Observe that 
    \begin{equation*}
        \pi^*(\vz) = \arg\max_{\vb \in [0, 1]^m} \E_{\vtheta \sim \cP}[u(\vz, \vb, \vtheta)] = \arg\max_{\vb \in [0, 1]^m} \sum_{i=1}^K \vp[i] \cdot u(\vz, \vb, \vtheta^{(i)}),
    \end{equation*}
    where $\cP$ is an unknown distribution with support on $\{\vtheta^{(1)}, \ldots, \vtheta^{(K)}\}$. 
    Let $\pi' := \pi^{(\vomega)}$ be the optimal policy in the discretization and let $\cP'$ be the corresponding distribution over $\vtheta$. 
    We have that 
    \begin{equation*}
    \begin{aligned}
        R(T) &= \E_{\vtheta \sim \cP} [\sum_{t=1}^T u(\vz_t, \pi^*(\vz_t), \vtheta) - u(\vz_t, \vb_t, \vtheta)]\\
        &= \sum_{t=1}^T \left( \sum_{i=1}^K \vp'[i] (u(\vz_t, \pi'(\vz_t), \vtheta^{(i)}) - u(\vz_t, \vb_t, \vtheta^{(i)})) + \sum_{i=1}^K (\vp'[i] - \vp[i]) \cdot u(\vz_t, \vb_t, \vtheta^{(i)}) \right. \\
        &+ \left. \sum_{i=1}^K (\vp[i] - \vp'[i]) \cdot u(\vz_t, \pi^*(\vz_t), \vtheta^{(i)}) + \sum_{i=1}^K \vp'[i] (u(\vz_t, \pi^*(\vz_t), \vtheta^{(i)}) - u(\vz_t, \pi'(\vz_t), \vtheta^{(i)})) \right)\\
        &\leq 2K + \E_{\vtheta_1, \ldots, \vtheta_T \sim \cP'} \left[\sum_{t=1}^T u(\vz_t, \pi'(\vz_t), \vtheta_t) - u(\vz_t, \vb_t, \vtheta_t) \right]
    \end{aligned}
    \end{equation*}
    The rest of the proof follows identically to the proof of~\Cref{thm:stack1}, but without the discretization step.
\end{proof}

\auctionstwo*

\begin{proof}
    The proof follows identically to the proof of~\Cref{thm:stack2}, but without the loss in utility due to discretization. 
    To see why, let $N_i := \sum_{t : \vtheta_t = \vtheta^{(i)}} 1$ and observe that 
    \begin{equation*}
    \begin{aligned}
        \pi^*(\vz) &:= \arg\max_{\vb \in [0, 1]^m} \E_{\vz \sim \cP}[\sum_{t=1}^T u(\vz, \vb, \vtheta_t)]\\
        &= \arg\max_{\vb \in [0, 1]^m} \sum_{i=1}^K \frac{N_i}{T} \cdot \E_{\vz \sim \cP}[u(\vz, \vb, \vtheta^{(i)})]\\
    \end{aligned}
    \end{equation*}
\end{proof}

\end{document}